\newtheorem{theorem}{Theorem} %[section]
\newtheorem{lemma}[theorem]{Lemma}
\newtheorem{observation}[theorem]{Observation}
\newcounter{cntLemmaNumber}
\newcounter{cntTheoremNumber}
\newcommand{\eps}{\epsilon}
\newcommand{\size}[1]{\ensuremath{\left|#1\right|}}
\newcommand{\norm}[1]{\ensuremath{\|#1\|}}
\newcommand{\set}[1]{\left\{ #1 \right\}}
\newcommand{\Ccal}{\mathcal{C}}
\newcommand{\Cbar}{\overline{\mathcal{C}}}
\newcommand{\Sbar}{\overline{S}}
\newcommand{\Dcal}{[0,1]^d}
\DeclarePairedDelimiterX{\infdivx}[2]{(}{)}{%
  #1\;\delimsize\|\;#2%
}
\begin{document}
\begin{titlepage}
\title{Mini-batch $k$-means terminates within $O(d/\eps)$ iterations}
\author{Gregory Schwartzman\footnotemark[3]}
\date{}
\renewcommand*{\thefootnote}{\fnsymbol{footnote}}
\footnotetext[3]{%
JAIST,
\texttt{greg@jaist.ac.jp}.
}
\end{titlepage}

\maketitle
\begin{abstract}
We answer the question: "Does \emph{local} progress (on batches) imply \emph{global} progress (on the entire dataset) for mini-batch $k$-means?". Specifically, we consider mini-batch $k$-means which terminates only when the improvement in the quality of the clustering on the sampled batch is below some threshold.

Although at first glance it appears that this algorithm might execute forever, we answer the above question in the affirmative and show that if the batch is of size $\Tilde{\Omega}((d/\epsilon)^2)$, it must terminate within $O(d/\eps)$ iterations with high probability, where $d$ is the dimension of the input, and $\epsilon$ is a threshold parameter for termination. This is true \emph{regardless} of how the centers are initialized. When the algorithm is initialized with the $k$-means++ initialization 
scheme, it achieves an approximation ratio of $O(\log k)$ (the same as the full-batch version). 

Finally, we show the applicability of our results to the mini-batch $k$-means algorithm implemented in the scikit-learn (sklearn) python library. 

%Our stopping condition is slightly different than the one used in practice, however, we show that our results still hold for the mini-batch $k$-means algorithm implemented in the scikit-learn (sklearn) python library. 

\end{abstract}
%\pagenumbering{gobble}
%\newpage
%\pagenumbering{arabic}
\section{Introduction} 
%$k$-means is one of the most popular clustering algorithms used in practice. However, when the input is too large 
%\Gcomment{What happens if I duplicate my data? Does something break? If not, then I can make any model underparameterized.}
The mini-batch $k$-means algorithm \cite{Sculley10} is one of the most popular clustering algorithms used in practice \cite{scikit-learn}. However, due to its stochastic nature, it appears that if we do not explicitly bound the number of iterations of the algorithm, then it might never terminate. We show that, when the batch size is sufficiently large, using only an "early-stopping" condition, which terminates the algorithm when the local progress observed on a batch is below some threshold, we can guarantee a bound on the number of iterations that the algorithm performs which is \emph{independent} of input size.
%Our bound only depends logarithmically on the size of the input and the number of clusters, and provides a theoretical foundation for the use
%show that if the batch size is sufficiently large we 
%In this paper we provide the first known 
%It achieves a much faster running time on large datasets 
%Roughly speaking the $k$-means algorithm (Lloyd's algorithm) receives as input a set of $d$-dimensional real-valued vectors and a parameter $k$ 

%\subsection{$k$-means}
\paragraph{Problem statement}
We consider the following optimization problem. We are given an input (dataset), $X=\set{x_i}_{i=1}^n\subseteq [0,1]^d$, of size $n$ of $d$-dimensional real vectors and a parameter $k$. Note that the assumption that $X \subseteq [0,1]^{d}$ is standard in the literature \cite{ArthurMR11}, and is meant to simplify notation (otherwise we would have to introduce a new parameter for the diameter of $X$). 
Our goal is to find a set $\Ccal$ of $k$ centers (vectors in $[0,1]^d$) such that the following goal function is minimized:
\[
\frac{1}{n}\sum_{x \in X} \min_{c \in \Ccal} \norm{c-x}^2
\]

Usually, the $1/n$ factor does not appear as it does not affect the optimization goal, however, in our case, it will be useful to define it as such.
 %In this paper we do not deal with the quality of the solution $\Ccal$, only with the time it takes an algorithm that aims to solve the above problem to terminate. 

\paragraph{Lloyd's algorithm}
The most popular method to solve the above problem is Lloyd's algorithm (often referred to as the $k$-means algorithm) \cite{Lloyd82}. It works by randomly initializing a set of $k$ centers and performing the following two steps: (1) Assign every point in $X$ to the center closest to it. (2) Update every center to be the mean of the points assigned to it.
The algorithm terminates when no point is reassigned to a new center. This algorithm is extremely fast in practice but has a worst-case exponential running time \cite{ArthurV06, Vattani11}. 
%On the other hand the smoothed complexity of this algorithm is polynomial (specifically, $\Tilde{O}((nk)^{34}d^8\sigma^{-6})$) \cite{ArthurMR11}.

\paragraph{Mini-batch $k$-means}
To update the centers, Lloyd's algorithm must go over the entire input at every iteration. This can be computationally expensive when the input data is extremely large. To tackle this, the mini-batch $k$-means method was introduced by \cite{Sculley10}. It is similar to Lloyd's algorithm except that steps (1) and (2) are performed on a batch of $b$ elements sampled uniformly at random with repetitions, and in step (2) the centers are updated slightly differently. Specifically, every center is updated to be the weighted average of its current value and the mean of the points (in the batch) assigned to it. The parameter by which we weigh these values is called the \emph{learning rate}, and its value differs between centers and iterations. 
In the original paper by Sculley, there is no stopping condition similar to that of Lloyd's algorithm, instead, the algorithm is simply executed for $t$ iterations, where $t$ is an input parameter.

In practice (for example in sklearn \cite{scikit-learn}), together with an upper bound on the number of iterations to perform there are several "early stopping" conditions. We may terminate the algorithm when the change in the locations of the centers is sufficiently small or when the change in the goal function for several consecutive batches does not improve.
We note that in both theory \cite{TangM17,Sculley10} and practice \cite{scikit-learn} the learning rate goes to 0 over time. That is, over time the movement of centers becomes smaller and smaller, which guarantees termination for most reasonable early-stopping conditions at the limit. 
%However, even when the learning rate goes to 0, there is are no known upper bounds on the execution time of the algorithm if we do not explicitly bound the number of iterations.

Our results are the first to show extremely fast termination guarantees for mini-batch $k$-means with early stopping conditions. Surprisingly, we need not require the learning rate to go to 0.

\paragraph{Related work} Mini-batch $k$-means was first introduced by \cite{Sculley10} as a natural generalization to online $k$-means \cite{BottouB94} (here the batch is of size 1). We are aware only of a single paper that analyzes the convergence rate of mini-batch $k$-means \cite{TangM17}. It is claimed in \cite{TangM17} that under mild assumptions the algorithm has $O(1/t)$ convergence rate. That is, after $t$ iterations it holds that the current value of the goal function is within an additive $O(1/t)$ factor from the value of the goal function in some local optimum of Lloyd's algorithm. However, their asymptotic notation subsumes factors that depend on the size of the input. Taking this into account, we get a convergence rate of $\Omega(n^2/t)$, which implies, at best, a quadratic bound on the execution time of the algorithm. This is due to setting the learning rate at iteration $t$ to $O(1/(n^2+t))$. Our results do not guarantee convergence to any local-minima, however, they guarantee an exponentially faster runtime bound.
%This also means that their results do not apply to the algorithm as described in \cite{Sculley10} and implemented in sklearn, which uses a different learning rate (see Section~\ref{sec: sklearn} for details). Our results, on the other hand, do not guarantee convergence to any local-minima, however, they guarantee a much faster runtime bound and also directly apply to the algorithm implementation of sklearn. \Gcomment{reqrite last lines}

%To achieve our results we make use of \emph{entropy compression}, first considered by Moser and Tardos \cite{Moser09, MoserT10} to prove a constructive version of the Lovász local lemma (the name "entropy compression" was coined by Terrence Tao \cite{tao}). Recently a similar approach was used in \cite{Schwartzman} to prove conditional bounds on the convergence of SGD. 
%Although mini-batch $k$-means can be seen as a gradient descent algorithm
%Although mini-batch $k$-means can be viewed as a 
%The results of \cite{Schwartzman} rely on the facts that the solution is updated via gradient descent, the goal function is $L$-smooth, and the learning rate is sufficiently small. On the other hand, our results are meaningful regardless of how the solution is updated.
%we make no assumption on how the solution is updated, as long as local progress is achieved.
%Although we also use entropy compression in our proof, we do so without making any assumptions 

\paragraph{Our results}
We analyze the mini-batch $k$-means algorithm described above \cite{Sculley10}, where the algorithm terminates only when the improvement in the quality of the clustering for the sampled batch is less than some threshold parameter $\eps$. That is, we terminate if for some batch the difference in the quality of the clustering before the update and after the update is less than $\eps$. Our stopping condition is slightly different than what is used in practice. In sklearn termination is determined based on the changes in cluster centers. In Section~\ref{sec: sklearn} we prove that this condition also fits within our framework. 

%We note that in both theory \cite{TangM17,Sculley10} and practice \cite{scikit-learn} the learning rate goes to 0 over time. This guarantees termination for most reasonable early-stopping conditions. Surprisingly, we are able to show that this is not necessary to guarantee convergence. 

Our main goal is to answer the following theoretical question: "Does local progress (on batches) imply global progress (on the entire dataset) for mini-batch $k$-means, even when the learning rate does not go to 0?". Intuitively, it is clear that the answer depends on the batch size used by the algorithm. If the batch is the entire dataset the claim is trivial and results in a termination guarantee of $O(d/\eps)$ iterations\footnote{This holds because the maximum value of the goal function is $d$ (Lemma~\ref{lem: goal func ub}).}. 
%On the other hand, if the batch consists of a single element the process becomes unstable and local progress need not imply global progress \Gcomment{why?}. 
We show that when the batch size exceeds a certain threshold, indeed local progress implies global progress and we achieve the same asymptotic bound on the number of iterations as when the batch is the entire dataset. We present several results:

We start with a warm-up in Section~\ref{sec: warm up}, showing that when $b= \Tilde{\Omega}(kd^3 \eps^{-2})$ we can guarantee termination within $O(d/\eps)$ iterations\footnote{Throughout this paper the tilde notation hides logarithmic factors in $n,k,d,\eps$.} w.h.p (with high probability)\footnote{This is usually taken to be $1-1/n^p$ for some constant $ p\geq 1$. For our case, it holds that $p=1$, however, this can be amplified arbitrarily by increasing the batch size by a multiplicative constant factor.}. We require the additional assumption that every real number in the system can be represented using $O(1)$ bits (e.g., 64-bit floats).
The above bound holds \emph{regardless} of how cluster centers are initialized or updated. That is, this bound holds for any center update rule, and not only for the "standard" center update rule described above. Our proof uses elementary tools and is presented to set the stage for our main result.
%Our proof is elementary, and we mainly use this result to set the stage for our main result.

In Section~\ref{sec: main} we show that using the standard update rule, we can achieve the same termination time with a much smaller batch size. Specifically, a batch size of $\Omega((d / \eps)^2 \log (nkd/\eps))= \Tilde{\Omega}((d / \eps)^2)$ is sufficient to guarantee termination within $O(d/\eps)$ iterations. This holds regardless of how centers are initialized and does not require any assumption on the number of bits required to represent real numbers.
Our proof makes use of the fact that the standard update rule adds additional stability to the stochastic process when the learning rate is sufficiently small (but need not go to 0). 
Finally, in Section~\ref{sec: sklearn}, we show that our main result also holds for the early stopping condition used in sklearn (with our learning rate). However, this results in a larger batch size and slower termination. Specifically if $b = \Tilde{\Omega}((d/\eps)^3k)$ we terminate within $O((d / \epsilon)^{1.5}\sqrt{k})$ iterations w.h.p. 

Note that for the batch size to be reasonable, we must require that $b \leq n$, which implies that $(d / \epsilon)^{2}\log (nkd/\eps) = O(n)$. Thus, our results only hold for a certain range of values for $k,d,\epsilon$. This is reasonable, as in practice it is often the case that $\eps =O(1), d \ll n$ and the dependence on the rest of the parameters is logarithmic.

\paragraph{Solution quality} Applying the $k$-means++ initialization scheme to our results we achieve the same approximation ratio, $O(\log k)$ in expectation, as the full-batch algorithm. The approximation guarantee of $k$-means++ is guaranteed already in the initialization phase (Theorem 3.1 in \cite{ArthurV07}), and the execution of Lloyd's algorithm following initialization can only improve the solution. We show that w.h.p the global goal function is decreasing throughout our execution which implies that the approximation guarantee remains the same.

\section{Preliminaries}
\label{sec: prelims}
%\Gcomment{change $\mathbb{R}$ to $\mathbb{F}$}
%\paragraph{Tuple notation} 
Throughout this paper we work with ordered tuples rather than sets, denoted as $Y=(y_i)_{i\in [\ell]}$, where $[\ell]=\set{1,\dots,\ell}$. To reference the $i$-th element we either write $y_i$ or $Y[i]$. It will be useful to use set notations for tuples such as $x\in Y \iff \exists i\in [\ell], x=y_i$ and $Y\subseteq Z \iff \forall i\in [\ell], y_i\in Z$. When summing we often write $\sum_{x\in Y}g(x)$ which is equivalent to $\sum_{i=1}^{\ell}g(Y[i])$. 

We borrow the following notation from \cite{KanungoMNPSW04}. For every $x,y \in \mathbb{R}^d$ let $\Delta(x,y) = \norm{x-y}^2$. For every finite tuple $S\subseteq \mathbb{R}^d$ and a vector $x \in \mathbb{R}^d$ let $\Delta(S,x) = \sum_{y\in S} \Delta(y,x)$.
%\paragraph{Problem statement}
\paragraph{$k$-means} We are given an input $X=(x_i)_{i=1}^n \subseteq [0,1]^{d}$ and a parameter $k$. 
 Our goal is to find a tuple $\Ccal \subseteq \mathbb{R}^d$ of $k$ centers such that the following goal function is minimized:
\[
\frac{1}{n}\sum_{x \in X} \min_{C \in \Ccal} \Delta(x,C)
\]

%Let us assigned every point in $X$ with some unique id of $\log n$ bits. \Gcomment{talk about floor, ceils} 
%Thus, we can say there that there exists some unique ordering of $X$. It will be convenient to treat $X$ as an ordered tuple at times.
%Although we define $\Ccal$ as a set, it would be convenient at times to treat it as an ordered tuples. We use this tuple and set notation interchangeably \Gcomment{Do we really need a set notation?}.  

Let us define for every $x\in X$ the function $f_x: \mathbb{R}^{k\cdot d} \rightarrow \mathbb{R}$ where $f_x(\Ccal) = \min_{C \in \Ccal} \Delta(x,C)$. We can treat $\mathbb{R}^{k\cdot d}$ as the set of $k$-tuples of $d$-dimensional vectors.
We also define the following function for every tuple $A=(a_i)_{i=1}^{\ell} \subseteq X$:
$$f_A(\Ccal) = \frac{1}{\ell} \sum^{\ell}_{i=1} f_{a_i}(\Ccal) $$
Note that $f_X$ is our original goal function. We state the following useful lemma:

\begin{lemma}
\label{lem: goal func ub}
For any tuple of $k$ centers $\Ccal \subset [0,1]^{d}$ it holds that $f_X(\Ccal) \leq d$.
\end{lemma}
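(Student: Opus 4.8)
The plan is to bound the squared distance $\Delta(x, C)$ coordinate by coordinate and then undo the two aggregation steps (minimum over centers, average over points) in the definition of $f_X$. Concretely, fix any point $x \in X$ and any center $C \in \Ccal$. Since both $x$ and $C$ lie in $[0,1]^d$, each coordinate difference $x_j - C_j$ has absolute value at most $1$, so $(x_j - C_j)^2 \le 1$; summing over the $d$ coordinates gives
\[
\Delta(x,C) = \norm{x-C}^2 = \sum_{j=1}^{d}(x_j - C_j)^2 \le d .
\]
This is exactly the statement that $[0,1]^d$ has squared diameter $d$, and it is the only geometric input needed.

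Next I would push this through the definitions. Because the bound $\Delta(x,C) \le d$ holds for \emph{every} center $C \in \Ccal$ (and $\Ccal$ is nonempty), it holds in particular for the minimizing center, so $f_x(\Ccal) = \min_{C \in \Ccal}\Delta(x,C) \le d$ for every $x \in X$. Averaging over the $n$ points of $X$ then yields $f_X(\Ccal) = \frac{1}{n}\sum_{x \in X} f_x(\Ccal) \le \frac{1}{n}\cdot n \cdot d = d$, which is the claim. I do not anticipate any real obstacle here: the statement is elementary and the entire argument is the coordinatewise estimate above together with monotonicity of $\min$ and of averaging; the $1/n$ normalization in the goal function is precisely what makes the bound $d$ (rather than $nd$) come out cleanly.
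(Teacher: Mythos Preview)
Your proposal is correct and matches the paper's proof essentially line for line: bound $\Delta(x,C)\le d$ from $x,C\in[0,1]^d$, deduce $f_x(\Ccal)\le d$, and average. The only cosmetic difference is that the paper writes $f_x(\Ccal)\le \max_{C\in\Ccal}\Delta(x,C)\le d$ whereas you spell out the coordinatewise estimate and use the minimum directly; the content is identical.
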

\begin{proof}
Because $X,\Ccal \subset [0,1]^{d}$ it holds that $\forall x\in X, f_x(\Ccal) \leq \max_{C\in \Ccal} \Delta(x,C) \leq d$.
Therefore $f_X(\Ccal) = \frac{1}{n}\sum_{x\in X}f_x(\Ccal)\leq \frac{1}{n}\cdot nd=d$.

\end{proof}

%Throughout this paper, we assume that every real number is represented using $q=O(1)$ bits. We do not care about the actual representation system (floating point or fixed point), as long as $q$ is constant and the numeric stability of the algorithm is sufficiently high (we do not deal with numerical stability in this paper). 
%Let us denote $\beta=qkd$. 
%\Gcomment{move last line when defining $b$}
%\Gcomment{add footnote about precision where required?}
We state the following well known theorems:
%\Gcomment{Add permutation bound?}
%\Gcomment{ref hypergeo in wiki, prob also add original Hoeffding paper} %\Gcomment{Verify def of sampling without replacement}
%\Gcomment{Consider analyzing the below with the KL bounds...}
\begin{theorem}[\cite{hoeffding1994probability}]
\label{thm: hoeffding}
Let $Y_1,...,Y_m$ be independent random variables such that $\forall 1 \leq i \leq m, E[Y_i] = \mu$ and $Y_i \in [a_{min}, a_{max}]$. Then
\[
Pr \left(\size{\frac{1}{m}\sum_{i=1}^m Y_k - \mu} \geq \delta \right) \leq 2e^{ -2m\delta^2/(a_{max}-a_{min})^2} 
\]

\end{theorem}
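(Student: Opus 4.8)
The statement is the standard Hoeffding bound, and the plan is to prove it by the Chernoff--Cram\'er exponential-moment method combined with a bound on the moment generating function of a bounded, centered random variable. Write $\ell = a_{max}-a_{min}$ and $Z_i = Y_i-\mu$, so that each $Z_i$ satisfies $\Expc{Z_i}=0$ and lies in an interval of length $\ell$ (which necessarily contains $0$). It suffices to bound the upper tail $\Pr\!\left(\frac1m\sum_{i=1}^m Y_i-\mu\ge\delta\right)$: the lower tail follows by applying the same bound to $-Y_1,\dots,-Y_m$, and a union bound over the two one-sided events accounts for the leading factor of $2$.

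For the upper tail, fix $\lambda>0$. Applying Markov's inequality to the nonnegative variable $e^{\lambda\sum_i Z_i}$ and then using independence,
\[
\Pr\!\left(\sum_{i=1}^m Z_i\ge m\delta\right)\ \le\ e^{-\lambda m\delta}\,\Expc{e^{\lambda\sum_i Z_i}}\ =\ e^{-\lambda m\delta}\prod_{i=1}^m\Expc{e^{\lambda Z_i}}.
\]
The key ingredient (Hoeffding's lemma) is that for any random variable $Z$ with $\Expc{Z}=0$ and $Z\in[p,q]$ one has $\Expc{e^{\lambda Z}}\le e^{\lambda^2(q-p)^2/8}$. I would prove this by convexity of $t\mapsto e^{\lambda t}$: bounding $e^{\lambda z}$ on $[p,q]$ by the chord through $(p,e^{\lambda p})$ and $(q,e^{\lambda q})$ and taking expectations (using $\Expc{Z}=0$) gives $\Expc{e^{\lambda Z}}\le e^{\varphi(\lambda)}$ for an explicit function $\varphi$; one then checks that $\varphi(0)=\varphi'(0)=0$ and that $\varphi''(t)$ equals the variance of a random variable supported in $[p,q]$, hence $\varphi''(t)\le(q-p)^2/4$ for all $t$, so a second-order Taylor expansion yields $\varphi(\lambda)\le\lambda^2(q-p)^2/8$. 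This is the only genuinely technical step; everything else is bookkeeping.

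Plugging the lemma into the product with $q-p=\ell$ gives $\Pr\!\left(\frac1m\sum_{i=1}^m Y_i-\mu\ge\delta\right)\le e^{-\lambda m\delta+m\lambda^2\ell^2/8}$, and minimizing the exponent over $\lambda>0$ (the minimizer is $\lambda=4\delta/\ell^2$) produces the bound $e^{-2m\delta^2/\ell^2}$. Combining with the symmetric lower-tail estimate via the union bound gives the claimed $2e^{-2m\delta^2/(a_{max}-a_{min})^2}$. The main obstacle, as noted, is Hoeffding's lemma: a direct Taylor expansion of $\Expc{e^{\lambda Z}}$ does not obviously close, and it is the convexity-plus-variance argument that makes the constant $1/8$ (and hence the final constant $2$ in the exponent) come out cleanly.
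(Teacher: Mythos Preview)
Your proof sketch is the standard and correct derivation of Hoeffding's inequality via the Chernoff--Cram\'er method and Hoeffding's lemma. Note, however, that the paper does not prove this statement at all: it is quoted as a known result with a citation to \cite{hoeffding1994probability} and used as a black box in Lemma~\ref{lem: fB clost to fX} and Lemma~\ref{lem: bound center dist}. So there is no ``paper's own proof'' to compare against; your argument is exactly what one would find in the cited reference.
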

\begin{theorem}[\cite{jensen}]
\label{thm: jensen}
Let $\phi$ be a convex function, $y_1,\dots, y_n$ numbers in its domain and weights $a_1,\dots, a_n \in \mathbb{R}^+$. It holds that:
\[
\phi \left(\frac{\sum_{i=1}^n a_i y_i}{\sum_{i=1}^n a_i}\right) \leq \frac{\sum_{i=1}^n a_i \phi(y_i)}{\sum_{i=1}^n a_i}
\]
\end{theorem}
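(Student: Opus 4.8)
The plan is to prove the finite weighted form of Jensen's inequality by induction on $n$, using only the defining property of a convex function $\phi$ on an interval $I$: for all $u,v\in I$ and $\lambda\in[0,1]$ we have $\phi(\lambda u+(1-\lambda)v)\le \lambda\phi(u)+(1-\lambda)\phi(v)$. The base case $n=1$ is immediate, since both sides of the claimed inequality reduce to $\phi(y_1)$ after the positive factor $a_1$ cancels.

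For the inductive step I would write $W_m=\sum_{i=1}^m a_i$; since each $a_i\in\mathbb{R}^+$, every partial sum $W_m$ is strictly positive, so no division by zero ever occurs. Set $z=\frac{1}{W_{n-1}}\sum_{i=1}^{n-1}a_iy_i$. The first thing to check is that $z\in I$: it is a convex combination of $y_1,\dots,y_{n-1}\in I$ with coefficients $a_i/W_{n-1}$, and $I$, being an interval, is closed under convex combinations. Next I would express the argument on the left as a genuine two-point convex combination, $\frac{1}{W_n}\sum_{i=1}^n a_iy_i=\frac{W_{n-1}}{W_n}\,z+\frac{a_n}{W_n}\,y_n$, whose two coefficients lie in $[0,1]$ and sum to $1$. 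Applying the two-point convexity inequality gives $\phi\!\left(\frac{1}{W_n}\sum_{i=1}^n a_iy_i\right)\le \frac{W_{n-1}}{W_n}\phi(z)+\frac{a_n}{W_n}\phi(y_n)$, and the induction hypothesis bounds $\phi(z)\le \frac{1}{W_{n-1}}\sum_{i=1}^{n-1}a_i\phi(y_i)$. Substituting and cancelling $W_{n-1}$ yields exactly $\frac{1}{W_n}\sum_{i=1}^n a_i\phi(y_i)$, closing the induction.

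The computations are routine; the only thing that genuinely needs care — hence the \emph{main obstacle}, albeit a small one — is the domain bookkeeping, i.e.\ verifying that $z$ (and, recursively, the analogous partial averages hidden inside the induction hypothesis) lies in the domain of $\phi$, which is precisely where convexity of $I$ and strict positivity of the weights enter. As an alternative I would note the supporting-line proof: when $\bar y:=\frac{1}{W_n}\sum_i a_iy_i$ is interior to $I$, take a supporting line $\ell(y)=\phi(\bar y)+c(y-\bar y)\le\phi(y)$ at $\bar y$; then $\sum_i a_i\phi(y_i)\ge \sum_i a_i\ell(y_i)=W_n\phi(\bar y)+c\bigl(\sum_i a_iy_i-W_n\bar y\bigr)=W_n\phi(\bar y)$, and dividing by $W_n$ finishes. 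This version is shorter but relies on the existence of supporting lines (equivalently, one-sided derivatives) for convex functions, so for a self-contained argument I would prefer the induction.
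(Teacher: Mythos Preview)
Your proof is correct; the induction argument is the standard textbook proof of the finite weighted Jensen inequality, and your domain bookkeeping is handled carefully. Note, however, that the paper does not actually prove this statement: it is quoted as a classical result with a citation (\texttt{\textbackslash cite\{jensen\}}) and no proof is given, so there is nothing in the paper to compare your argument against.
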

\section{Warm-up: a simple bound}
\label{sec: warm up}
Let us first show a simple convergence guarantee which makes no assumptions about how the centers are updated. This will set the stage for our main result in Section~\ref{sec: main}, where we consider the standard update rule used in mini-batch $k$-means \cite{Sculley10,scikit-learn}.

\paragraph{Algorithm}
We analyze a generic variant of the mini-batch $k$-means algorithm, presented in Algorithm~\ref{alg: kmeans}. Note that it a very broad class of algorithms (including the widely used algorithm of \cite{Sculley10}). The only assumptions we make are: 
\begin{enumerate}
    \item The centers remain within $[0,1]^d$ (the convex hull bounding $X$).
    \item Batches are sampled uniformly at random from $X$ with repetitions.
    \item The algorithm terminates when updating the centers does not significantly improve the quality of the solution for the sampled batch. 
\end{enumerate}
Items (1) and (2) are standard both in theory and practice \cite{Sculley10, scikit-learn, TangM17}. Item (3) is usually referred to as an "early-stopping" condition. Early stopping conditions are widely used in practice (for example in sklearn \cite{scikit-learn}), together with a bound on the number of iterations. However, our early-stopping condition is slightly different than the one used in practice. We discuss this difference in Section~\ref{sec: sklearn}.

At first glance, guaranteeing termination for \emph{any} possible way of updating the centers might seem strange. However, if the update procedure is degenerate, it will make no progress, at which point the algorithm terminates.
%\Gcomment{change later} One difference to note in our implementation is the termination condition when the accuracy on the dataset is sufficiently high.  In practice the termination condition is not used, however, we only use it to prove that at \emph{some point} in time the accuracy of the model is indeed sufficiently large. (It is also possible to check this condition on a sample of a logarithmic number of elements of $X$ and achieve a good approximation to the accuracy w.h.p.)
\RestyleAlgo{boxruled}
\LinesNumbered
\begin{algorithm}[htbp]
	\DontPrintSemicolon
	\caption{Generic mini-batch $k$-means}
	\label{alg: kmeans}
	
		%Let $\epsilon \in (0,1)$ a precision parameter\\
	%Let $\alpha < 1/L$ be a gradient step parameter\\
	%$i \gets 1$ // iteration counter\\
	$\Ccal_{1}\subseteq \Dcal$ is an initial tuple of centers\\
	\For{$i=1$ to $\infty$}
	{
	    Sample $b$ elements, $B_i=(y_1, \dots, y_b)$, uniformly at random from $X$ (with repetitions)\\
	    Update $\Ccal_{i+1}$ (such that $\Ccal_{i+1}\subseteq \Dcal$)\\
	    \lIf{$f_{B_{i}}(\Ccal_{i}) - f_{B_{i}}(\Ccal_{i+1}) < \epsilon$}
	    {
	        Return $\Ccal_{i+1}$
	    }
	    %$i \gets i + 1$\\
	}
\end{algorithm}
%For our main result we consider the above algorithm where new centers are taken to be the weighted average of current centers and the center of mass of batch elements assigned to each center (we formally define this in Section~\ref{sec: main}). This is the "standard" updated rule used in both theory \cite{TangM17, Sculley10} and practice \cite{scikit-learn}.

\paragraph{Termination guarantees for Algorithm~\ref{alg: kmeans}}
To bound the number of iterations of such a generic algorithm we require the following assumption: every real number in our system can be represented using $q=O(1)$ bits. This implies that every set of $k$ centers can be represented using $qkd$ bits. This means that the total number of possible solutions is bounded by $2^{qkd}$. This will allow us to show that when the batch is sufficiently large, the sampled batch acts as a \emph{sparsifier} for the entire dataset. Specifically, it means that for \emph{any} tuple of $k$ centers, $\Ccal$, it holds that $\size{f_{B_i}(\Ccal) - f_X(\Ccal)} < \epsilon/4$. This implies that, for a sufficiently large batch size, simply sampling a single batch and executing Lloyd's algorithm on the batch will be sufficient, and executing mini-batch $k$-means is unnecessary. Nevertheless, this serves as a good starting point to showcase our general approach and to highlight the challenges we overcome in Section~\ref{sec: main} in order to reduce the required batch size without compromising the running time. 

We show that the algorithm must terminate within the first $t=O(d/\eps)$ iterations w.h.p.
%Let us assume that the algorithm executes for at least $t$ iterations. That is, the termination condition does not hold for the first $t$ iterations. Our goal is to upper bound $t$.
\paragraph{Parameter range} Let us first define the range of parameter values for which the results for this section hold. Recall that $n$ is the size of the input, $k$ is the number of centers, $d$ is the dimension, $\epsilon$ is the termination threshold. 
%For the rest of this section assume that $b=\Omega(\gamma (d/\epsilon)^2 \log (nt))$ where $\gamma =qkd$.
For the rest of this section assume that $b=\Omega( (d/\epsilon)^2 (kd+ \log (nt)))$.
%$b=\gamma (d/\epsilon)^2 \log (2nt)$ where $\gamma > 8qkd$. 
As $t = O(d/\eps)$, this implies that $b = \Tilde{\Omega}(kd^3\eps^{-2})$ is sufficient for our termination guarantees to hold. 
%Another reasonable requirement is that $b < n$ which implies that our results are only meaningful when $kd^3\eps^{-2}\log n = O(n)$. 

We state the following useful lemma which guarantees that $f_B(\Ccal)$ is not too far from $f_X(\Ccal)$ when the batch size is sufficiently large and $\Ccal$ is fixed (i.e., independent of the choice of $B_i$).

\begin{lemma}
\label{lem: fB clost to fX}
Let $B$ be a tuple of $b$ elements chosen uniformly at random from $X$ with repetitions. For any fixed tuple of $k$ centers, $\Ccal\subseteq [0,1]^d$, it holds that: $Pr[\size{f_B(\Ccal) - f_X(\Ccal)} \geq \delta ] \leq 2e^{-2b\delta^2 / d^2}$.
\iffalse
\[
Pr[\size{f_B(\Ccal) - f_X(\Ccal)} \geq \delta ] \leq 2e^{-2b\delta^2 / d^2} %(nt)^{-8\gamma \epsilon^{-2}\delta^2} 
%Pr[\size{f_B(\Ccal) - f_X(\Ccal)} \geq \frac{\eps}{4} ] \leq (nt)^{-\gamma} 
%\leq (nt)^{-2}
\]
\fi
\end{lemma}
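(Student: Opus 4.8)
The plan is to apply Hoeffding's inequality (Theorem~\ref{thm: hoeffding}) directly. First I would write $B = (y_1, \dots, y_b)$ where each $y_j$ is drawn uniformly at random from $X$ with repetitions, so the $y_j$ are i.i.d. For the fixed tuple of centers $\Ccal$, define the random variables $Y_j = f_{y_j}(\Ccal) = \min_{C \in \Ccal} \Delta(y_j, C)$ for $j \in [b]$. These are independent since the samples are drawn independently.

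Next I would compute the relevant quantities needed to invoke Hoeffding. For the range: since $\Ccal \subseteq [0,1]^d$ and every $y_j \in X \subseteq [0,1]^d$, we have $0 \leq f_{y_j}(\Ccal) \leq d$ (the same bound used in the proof of Lemma~\ref{lem: goal func ub}), so $Y_j \in [0, d]$ and $a_{max} - a_{min} \leq d$. For the mean: because $y_j$ is uniform over the $n$ elements of $X$, $\Expc{Y_j} = \frac{1}{n}\sum_{x \in X} f_x(\Ccal) = f_X(\Ccal)$, which is the same value $\mu$ for every $j$. Note that here it is essential that $\Ccal$ is fixed (independent of $B$), otherwise $Y_j$ would not have this clean expectation.

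Then $\frac{1}{b}\sum_{j=1}^b Y_j = f_B(\Ccal)$ by definition of $f_B$, so applying Theorem~\ref{thm: hoeffding} with $m = b$, $\delta = \delta$, and $(a_{max} - a_{min})^2 \leq d^2$ gives
\[
Pr\left[\size{f_B(\Ccal) - f_X(\Ccal)} \geq \delta\right] \leq 2e^{-2b\delta^2/d^2},
\]
as claimed. There is no real obstacle here — the only subtlety worth flagging is the independence-of-$\Ccal$ assumption (which is why this lemma alone does not suffice for the main argument and we later need the union bound over the $2^{qkd}$ possible center tuples), and making sure the range bound $[0,d]$ is invoked rather than a looser one so that the exponent matches the stated $d^2$ denominator.
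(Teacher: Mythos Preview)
Your proposal is correct and matches the paper's proof essentially line for line: define the i.i.d.\ variables $Y_j = f_{y_j}(\Ccal)$, observe that their common mean is $f_X(\Ccal)$ and that they lie in $[0,d]$, then apply Hoeffding with $m=b$ and $a_{\max}-a_{\min}\le d$. Your added remark about why $\Ccal$ must be fixed is exactly the point the paper exploits later when distinguishing $\Ccal_i$ from $\Ccal_{i+1}$.
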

\begin{proof}
Let us write $B=(y_1,\dots, y_b)$, where $y_i$ is a random element selected uniformly at random from $X$ with repetitions. For every such $y_i$ define the random variable $Z_i = f_{y_i}(\Ccal)$. These new random variables are IID for any fixed $\Ccal$. It also holds that $\forall i\in [b], E[Z_i] = \frac{1}{n}\sum_{x \in X}f_{x}(\Ccal) = f_X(\Ccal)$ and that $f_B(\Ccal) = \frac{1}{b} \sum_{x \in B}f_{x}(\Ccal) = \frac{1}{b} \sum_{i = 1}^b Z_i$.
\iffalse
\[
f_B(\Ccal) = \frac{1}{b} \sum_{x \in B}f_{x}(\Ccal) = \frac{1}{b} \sum_{i = 1}^b Z_i
\]
\fi
%Finally, note that $\forall x\in X, f_x(\Ccal) \leq \max_{c\in \Ccal} \norm{x-c} \leq D\sqrt{d} = d\sqrt{18}$ for any choice of $\Ccal \subseteq \Dcal$. 

Applying a Hoeffding bound (Theorem~\ref{thm: hoeffding}) with parameters $m=b,\mu = f_X(\Ccal), a_{max}-a_{min}\leq d$ we get that: $
Pr[\size{f_B(\Ccal) - f_X(\Ccal)} \geq \delta] \leq 2e^{-2b\delta^2 / d^2}$.
\iffalse
\[
Pr[\size{f_B(\Ccal) - f_X(\Ccal)} \geq \delta] \leq 2e^{-2b\delta^2 / d^2} %\leq (nt)^{-2\gamma (\delta/ \eps)^2}
%Pr[\size{f_B(\Ccal) - f_X(\Ccal)} \geq \frac{\eps}{4} f_X(\Ccal)] \leq e^{-b\eps^2 f_X(\Ccal)^2 / 16\cdot 9d^2} \leq 2e^{-b\eps^2 \sigma^2 / 16\cdot 9 \cdot36 d^2 } =n^{\Theta(-\gamma)}
\]
\fi
%Where in the final transition we use the fact that $b > \gamma(d/\eps)^2 \log (nt)$ for an appropriate constant. \Gcomment{update constant}
\end{proof}

Using the above we can show that every $B_i$ is a sparsifier for $X$.
\begin{lemma}
\label{lem: sparsifier}
It holds w.h.p that for every $i\in[t]$ and for every set of $k$ centers, $\Ccal\subset \Dcal$,  that $\size{f_{B_i}(\Ccal) - f_X(\Ccal)} < \epsilon/4$.
\end{lemma}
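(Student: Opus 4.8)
The approach is to reduce the "for every set of $k$ centers" quantifier to a finite union bound, and then apply Lemma~\ref{lem: fB clost to fX} to each fixed candidate. First I would invoke the bounded-precision assumption: since every real number is stored using $\varbits = O(1)$ bits, a tuple of $k$ centers lying in $\Dcal$ is an element of a finite set $\Gcal$ of size at most $2^{\varbits k d}$. This discretization is what lets us move from the per-$\Ccal$ guarantee of Lemma~\ref{lem: fB clost to fX} to a guarantee that holds simultaneously for all center tuples.

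Next, fix a single iteration index $i\in[t]$ and a single $\Ccal\in\Gcal$. Because $\Ccal$ is fixed (hence independent of the batch $B_i$), Lemma~\ref{lem: fB clost to fX} applies with $\delta=\eps/4$, yielding $Pr[\,\size{f_{B_i}(\Ccal)-f_X(\Ccal)}\geq \eps/4\,]\leq 2e^{-b\eps^2/(8d^2)}$. Taking a union bound over all $i\in[t]$ and all $\Ccal\in\Gcal$, the probability that $\size{f_{B_i}(\Ccal)-f_X(\Ccal)}\geq \eps/4$ for some $i$ and some $\Ccal\in\Gcal$ is at most $2\,t\,2^{\varbits k d}\,e^{-b\eps^2/(8d^2)}$. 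Finally I would plug in the assumed batch size $b=\Omega((d/\eps)^2(kd+\log(nt)))$: choosing the hidden constant large enough makes $b\eps^2/(8d^2)\geq \varbits k d+\log t+\log n+\log 2$, so the failure probability is at most $1/n$. Since the bound then holds for every member of $\Gcal$, and under the precision assumption any tuple of centers the algorithm could produce (including the data-dependent $\Ccal_{i+1}$) lies in $\Gcal$, it holds for all relevant $\Ccal\subset\Dcal$ w.h.p.

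The only real subtlety is the one just highlighted: Lemma~\ref{lem: fB clost to fX} genuinely requires $\Ccal$ to be independent of $B_i$, so it cannot be applied directly to data-dependent centers; the union bound over the finite grid $\Gcal$ is precisely the device that sidesteps this, at the cost of the $2^{\varbits k d}$ factor (and hence the $kd$ term in the batch size). Beyond this, the argument is a routine Hoeffding-tail-plus-union-bound computation, so I do not expect any further obstacle.
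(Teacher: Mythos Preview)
Your proposal is correct and follows essentially the same approach as the paper: apply Lemma~\ref{lem: fB clost to fX} with $\delta=\eps/4$, then union-bound over all $t$ iterations and all $2^{\varbits k d}$ discretized center tuples, and absorb both factors using $b=\Omega((d/\eps)^2(kd+\log(nt)))$. The paper's proof is terser but identical in substance; your explicit remark about why the finite grid is needed to handle data-dependent centers is exactly the point the paper makes in the paragraph following the lemma.
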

\begin{proof}
Using Lemma~\ref{lem: fB clost to fX}, setting $\delta = \eps/4$ and using the fact that $b = \Omega((d/\eps)^2 (kd + \log (nt)))$, we get: $Pr[\size{f_B(\Ccal) - f_X(\Ccal)} \geq \delta]  \leq 2e^{-2b\delta^2 / d^2} = 2^{-\Theta(b\delta^2 / d^2)}
= 2^{-\Omega(kd + \log (nt))}
$.
\iffalse
\[
Pr[\size{f_B(\Ccal) - f_X(\Ccal)} \geq \delta]  \leq 2e^{-2b\delta^2 / d^2} = 2^{-\Theta(b\delta^2 / d^2)}
= 2^{-\Omega(kd + \log (nt))}
\]
\fi
%\[
%Pr[\size{f_B(\Ccal) - f_X(\Ccal)} \geq \delta]  \leq 2e^{-2b\delta^2 / d^2} = 2e^{-b\eps^2 / 8d^2}  \leq  (nt)^{-\gamma} = (nt)^{-qkd}
%\]

Taking a union bound over all $t$ iterations and all $2^{qkd}$ configurations of centers, we get that the probability is bounded by $2^{-\Omega(kd + \log (nt))} \cdot 2^{qkd} \cdot t = O(1/n)$, for an appropriate constant in the asymptotic notation for $b$. %\Gcomment{clean this up?}
%the desired result.
\end{proof}

The lemma below guarantees \emph{global progress} for the algorithm.

\begin{lemma}
\label{lem: warm up main}
It holds w.h.p that $\forall i\in[t], f_X(\Ccal_i) - f_X(\Ccal_{i+1}) \geq \eps/2$.
\end{lemma}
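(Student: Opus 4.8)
The plan is to condition on the high-probability event of Lemma~\ref{lem: sparsifier} — call it $\mathcal{E}$ — that for every $i\in[t]$ and every representable tuple of $k$ centers $\Ccal\subseteq\Dcal$ we have $\size{f_{B_i}(\Ccal)-f_X(\Ccal)}<\eps/4$. Since $\mathcal{E}$ occurs with probability $1-O(1/n)$, it suffices to prove the per-iteration global decrease of $\eps/2$ \emph{deterministically} on $\mathcal{E}$. I would fix an iteration $i\le t$ that the algorithm actually reaches without having already terminated (if it has terminated earlier there is nothing to prove); then $\Ccal_i$ and $\Ccal_{i+1}$ are well defined, and by the invariant maintained in Algorithm~\ref{alg: kmeans} both lie in $\Dcal$.

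The first step is to invoke the stopping condition: because the algorithm did not terminate at iteration $i$, the test on line~5 of Algorithm~\ref{alg: kmeans} failed, i.e.\ $f_{B_i}(\Ccal_i)-f_{B_i}(\Ccal_{i+1})\ge\eps$. This is the only place the early-stopping rule enters, and it is what converts "the algorithm keeps running" into quantitative local progress on the batch. The second step is to transfer this batch progress to the whole dataset using $\mathcal{E}$: applying Lemma~\ref{lem: sparsifier} to $\Ccal_i$ gives $f_X(\Ccal_i)>f_{B_i}(\Ccal_i)-\eps/4$, and applying it to $\Ccal_{i+1}$ gives $f_X(\Ccal_{i+1})<f_{B_i}(\Ccal_{i+1})+\eps/4$; subtracting and combining with the previous inequality yields $f_X(\Ccal_i)-f_X(\Ccal_{i+1})>\big(f_{B_i}(\Ccal_i)-f_{B_i}(\Ccal_{i+1})\big)-\eps/2\ge\eps-\eps/2=\eps/2$, which is exactly the claim.

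The one point that needs care — and which I expect to be the only genuine obstacle — is that $\Ccal_{i+1}$ is a random object that depends on the batch $B_i$, so a direct appeal to Lemma~\ref{lem: fB clost to fX} (which assumes $\Ccal$ is fixed and independent of $B_i$) would be illegitimate. This is precisely why Lemma~\ref{lem: sparsifier} was stated with a union bound over all $2^{qkd}$ representable center configurations: on $\mathcal{E}$ the sparsification estimate holds simultaneously for every possible value of $\Ccal_i$ and $\Ccal_{i+1}$, so it may be applied to the realized (random) centers with no independence assumption, using only the $q=O(1)$-bit representability assumption. I would close by noting that this lemma is the engine for the section: combined with $0\le f_X(\Ccal_i)\le d$ (Lemma~\ref{lem: goal func ub}), telescoping a per-iteration decrease of $\eps/2$ over $t=\Theta(d/\eps)$ iterations is impossible, forcing termination within $O(d/\eps)$ steps.
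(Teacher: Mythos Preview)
Your proposal is correct and follows essentially the same approach as the paper: condition on the sparsifier event of Lemma~\ref{lem: sparsifier}, invoke the failed stopping condition to get $f_{B_i}(\Ccal_i)-f_{B_i}(\Ccal_{i+1})\ge\eps$, and apply the $\eps/4$ bounds at $\Ccal_i$ and $\Ccal_{i+1}$ to transfer the batch progress to $f_X$. Your explicit discussion of why the union bound over all representable center tuples is needed (to handle the dependence of $\Ccal_{i+1}$ on $B_i$) is a welcome addition that the paper only mentions after the proof.
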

\begin{proof}
Let us write (the notation $\pm x$ means that we add and subtract $x$):
\begin{align*}
    f_X(\Ccal_i) - f_X(\Ccal_{i+1}) = f_X(\Ccal_i) \pm f_{B_i}(\Ccal_i) \pm f_{B_i}(\Ccal_{i+1}) - f_X(\Ccal_{i+1}) \geq \eps/2
\end{align*}
Due to Lemma~\ref{lem: sparsifier} it holds that w.h.p $f_X(\Ccal_i) - f_{B_i}(\Ccal_i)> -\eps/4$ and $f_{B_i}(\Ccal_{i+1}) - f_X(\Ccal_{i+1}) >-\eps/4$. Finally due to the termination condition it holds that $f_{B_i}(\Ccal_i) - f_{B_i}(\Ccal_{i+1}) \geq \eps$. This completes the proof.
\end{proof}
As $f_X$ is upper bounded by $d$, it holds that we must terminate within $O(d/ \eps)$ iterations w.h.p when $b = \Omega(kd^3 \eps^{-2} \log (nd/ \eps))$. We state our main theorem for this Section.
\begin{theorem}
For $b = \Tilde{\Omega}(kd^3 \eps^{-2})$, Algorithm~\ref{alg: kmeans} terminates within $O(d / \epsilon)$ iterations w.h.p.
\end{theorem}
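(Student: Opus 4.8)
The plan is to combine the per-iteration \emph{global} progress guarantee (Lemma~\ref{lem: warm up main}) with the boundedness of the global objective (Lemma~\ref{lem: goal func ub}) via a telescoping argument. First I would fix the horizon $t = \lfloor 2d/\eps \rfloor + 1 = O(d/\eps)$; note this is the same $t$ that appears implicitly in the batch-size assumption $b = \Omega((d/\eps)^2(kd + \log(nt)))$, so all the earlier lemmas of this section apply verbatim with this choice. I then condition on the high-probability event $\mathcal{E}$ of Lemma~\ref{lem: sparsifier} — namely that $\size{f_{B_i}(\Ccal) - f_X(\Ccal)} < \eps/4$ holds \emph{simultaneously} for every $i \in [t]$ and every representable tuple of $k$ centers $\Ccal \subset \Dcal$ — which by that lemma occurs with probability $1 - O(1/n)$.

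Next, suppose for contradiction that on the event $\mathcal{E}$ the algorithm has not terminated during any of its first $t$ iterations. Then for each $i \in [t]$ the stopping test in Algorithm~\ref{alg: kmeans} failed, i.e.\ $f_{B_i}(\Ccal_i) - f_{B_i}(\Ccal_{i+1}) \geq \eps$; combined with the two sparsification inequalities supplied by $\mathcal{E}$ this is exactly the hypothesis of Lemma~\ref{lem: warm up main}, which yields $f_X(\Ccal_i) - f_X(\Ccal_{i+1}) \geq \eps/2$ for all $i \in [t]$. Summing over $i = 1,\dots,t$ telescopes to
\[
f_X(\Ccal_1) - f_X(\Ccal_{t+1}) \;\geq\; t\eps/2 .
\]
Since $f_X(\cdot) \geq 0$ always (it is an average of squared norms) and $f_X(\Ccal_1) \leq d$ by Lemma~\ref{lem: goal func ub}, the left-hand side is at most $d$, so $t\eps/2 \leq d$, i.e.\ $t \leq 2d/\eps$ — contradicting $t = \lfloor 2d/\eps\rfloor + 1$. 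Hence, on $\mathcal{E}$, the algorithm terminates within $t = O(d/\eps)$ iterations, and this happens with probability $1 - O(1/n)$, which is the claimed high-probability bound.

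Finally I would unwind the parameter bookkeeping: substituting $t = O(d/\eps)$ into $b = \Omega((d/\eps)^2(kd + \log(nt)))$ gives $b = \Omega((d/\eps)^2(kd + \log(nd/\eps))) = \Tilde{\Omega}(kd^3\eps^{-2})$, matching the statement; and recall that the $\varbits = O(1)$-bit representation assumption is precisely what keeps the number of center configurations finite ($2^{\varbits k d}$), so that the union bound inside Lemma~\ref{lem: sparsifier} is meaningful. The only genuine subtlety is the order of quantifiers — the horizon $t$ must be pinned down \emph{before} the batches are sampled, so that Lemma~\ref{lem: sparsifier}'s union bound ranges over a fixed, polynomial-size collection of (iteration, configuration) pairs; once $\mathcal{E}$ is in hand the argument is entirely deterministic. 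I therefore do not expect a real obstacle here: this theorem is essentially a corollary assembled from Lemmas~\ref{lem: goal func ub}, \ref{lem: sparsifier}, and~\ref{lem: warm up main}.
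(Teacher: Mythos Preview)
Your proposal is correct and follows essentially the same approach as the paper: invoke Lemma~\ref{lem: warm up main} (which already packages Lemma~\ref{lem: sparsifier} together with the non-termination assumption) to get per-iteration global progress $\geq \eps/2$, then telescope against the bound $f_X(\Ccal_1)\leq d$ from Lemma~\ref{lem: goal func ub}. The paper states this conclusion in one sentence; you have simply made the telescoping and the fixing of the horizon $t$ explicit, which is a welcome clarification but not a different argument.
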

\paragraph{Towards a smaller batch size} Note that the batch size used in this section is about a $kd$ factor larger than what we require in Section~\ref{sec: main}. This factor is required for the union bound over all possible sets of $k$ centers in Lemma~\ref{lem: sparsifier}. However, when actually applying Lemma~\ref{lem: sparsifier}, we only apply it for two centers in iteration $i$, setting $B=B_i$ and $\Ccal = \Ccal_i, \Ccal_{i+1}$. A more direct approach would be to apply Lemma~\ref{lem: fB clost to fX} only for $\Ccal_i, \Ccal_{i+1}$, which would get rid of the extra $kd$ factor. This will work when $\Ccal = \Ccal_i$ as $B_i$ is sampled \emph{after} $\Ccal_i$ is determined, but will fail for $\Ccal = \Ccal_{i+1}$ because $\Ccal_{i+1}$ may depend on $B_i$. In the following section, we show how to use the fact that the learning rate is sufficiently small in order to overcome this challenge.
\section{Main results}
\label{sec: main}
In this section, we show that we can get a much better dependence on the batch size when using the standard center update rule. Specifically, we show that a batch of size $\Tilde{\Omega}((d/\eps)^2)$ is sufficient to guarantee termination within $O(d/\eps)$ iterations. We also do not require any assumption about the number of bits required to represent a real number.

\paragraph{Section preliminaries} Let us define for any finite tuple $S\subset \mathbb{R}^d$ the center of mass of the tuple as $cm(S)=\frac{1}{\size{S}}\sum_{x\in S}x$. For any tuple $S\subset \mathbb{R}^d$ and some tuple of cluster centers $\Ccal=(\Ccal^\ell)_{\ell\in[k]}$ it implies a \emph{partition} $(S^\ell)_{\ell\in [k]}$ of the points in $S$. Specifically, every $S^\ell$ contains the points in $S$ closest to $\Ccal^\ell$ and every point in $S$ belongs to a single $\Ccal^\ell$ (ties are broken arbitrarily). 
We state the following useful observation:
\begin{observation}
\label{obs: partition is opt}
Fix some $A\subseteq X$. Let $\Ccal$ be a tuple of $k$ centers, $S = (S^\ell)_{\ell \in [k]}$ be the partition of $A$ induced by $\Ccal$ and $\Sbar = (\Sbar^\ell)_{\ell \in [k]}$ be any other partition of $A$. It holds that $\sum_{j=1}^k \Delta(S^j, \Ccal^j) \leq \sum_{j=1}^k \Delta(\Sbar^j, \Ccal^j)$.
\end{observation}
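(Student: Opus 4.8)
The plan is to prove the inequality pointwise over the elements of $A$. The key reformulation is that for \emph{any} partition $(T^\ell)_{\ell\in[k]}$ of $A$, the quantity $\sum_{j=1}^k \Delta(T^j,\Ccal^j)$ can be rewritten as a sum over the individual points of $A$: since $T^1,\dots,T^k$ are disjoint and cover $A$, every $x\in A$ lies in exactly one part, say $T^{\ell(x)}$, and hence $\sum_{j=1}^k \Delta(T^j,\Ccal^j) = \sum_{j=1}^k\sum_{x\in T^j}\Delta(x,\Ccal^j) = \sum_{x\in A}\Delta(x,\Ccal^{\ell(x)})$.

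Applying this to both $S$ and $\Sbar$, it suffices to show that for every $x\in A$ we have $\Delta(x,\Ccal^{s(x)}) \le \Delta(x,\Ccal^{\bar s(x)})$, where $s(x)$ and $\bar s(x)$ denote the index of the part containing $x$ in $S$ and in $\Sbar$, respectively. By the definition of the partition $S$ induced by $\Ccal$, the index $s(x)$ is a minimizer of $\Delta(x,\Ccal^\ell)$ over $\ell\in[k]$, so $\Delta(x,\Ccal^{s(x)}) = \min_{\ell\in[k]}\Delta(x,\Ccal^\ell) = f_x(\Ccal)$, whereas $\bar s(x)$ is merely \emph{some} index in $[k]$, so $\Delta(x,\Ccal^{\bar s(x)}) \ge \min_{\ell\in[k]}\Delta(x,\Ccal^\ell) = f_x(\Ccal)$. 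Summing the per-point inequality $\Delta(x,\Ccal^{s(x)}) \le \Delta(x,\Ccal^{\bar s(x)})$ over all $x\in A$ gives the claim.

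I expect no real obstacle here: this is the familiar statement that the nearest-center assignment minimizes clustering cost for a fixed set of centers, and the proof is the one-line pointwise comparison above. The only minor bookkeeping points are that a point appearing with multiplicity in the tuple $A$ simply contributes that many times to each side (which does not affect the argument), and that the arbitrary tie-breaking used to define $S$ is irrelevant, since every valid choice among the minimizers yields the same value $\Delta(x,\Ccal^{s(x)}) = f_x(\Ccal)$.
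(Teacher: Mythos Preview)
Your proof is correct; it is exactly the standard pointwise argument that assigning each point to its nearest center minimizes the total cost for fixed centers. The paper itself offers no proof of this observation (it is stated without justification), so there is nothing further to compare.
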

Let $\Ccal_i^j$ denote the location of the $j$-th center in the beginning of the $i$-th iteration. Let $(B_i^\ell)_{\ell \in [k]}$ be the partition of $B_i$ induced by $\Ccal_i$ and let $(X_i^\ell)_{\ell \in [k]}$ be the partition of $X$ induced by $\Ccal_i$.

We analyze Algorithm~\ref{alg: kmeans} when clusters are updated as follows: $\Ccal_{i+1}^j = (1-\alpha_i^j)\Ccal_{i}^j + \alpha^j_i cm(B_i^j)$, where $\alpha_i^j$ is the \emph{learning rate}. Note that $B_i^j$ may be empty in which case $cm(B_i^j)$ is undefined, however, the learning rate is chosen such that $\alpha_i^j=0$ in this case ($\Ccal^j_{i+1} = \Ccal^j_{i}$). Note that the learning rate may take on different values for different centers, and may change between iterations. In the standard mini-batch $k$-means algorithm \cite{Sculley10,scikit-learn} the learning rate goes to 0 over time. This guarantees termination for most reasonable stopping conditions.

As before, we assume that the algorithm executes for at least $t$ iterations and upper bound $t$.
We show that the learning rate need not go to 0 to guarantee termination when the batch size is sufficiently large. Specifically, we set $\alpha_i^j = \sqrt{b_i^j / b}$, where $b_i^j = \size{B_i^j}$, and we require that $b = \Omega((d/\eps)^2 \log (ndtk))$.

\paragraph{Proof outline} 
In our proof, we use the fact that a sufficiently small learning rate enhances the stability of the algorithm, which in turn allows us to use a much smaller batch size compared to Section~\ref{sec: warm up}. Let us define the auxiliary value $\Cbar_{i+1}^j = (1-\alpha_i^j)\Ccal_{i}^j + \alpha^j_i cm(X_i^j)$. This is the $j$-th center at step $i+1$ if we were to use the entire dataset for the update, rather than just a batch. Note that this is only used in the analysis and not in the algorithm.

Recall that in the previous section we required a large batch size because we could not apply Lemma~\ref{lem: fB clost to fX} when $B=B_i$ and $\Ccal = \Ccal_{i+1}$ because $\Ccal_{i+1}$ may depend on $B_i$. To overcome this challenge we use $\Cbar_{i+1}$ instead of $\Ccal_{i+1}$. Note that $\Cbar_{i+1}$ only depends on $\Ccal_i, X$ and is independent of $B_i$ (i.e., we can fix its value before sampling $B_i$). We show that for our choice of learning rate it holds that $\Cbar_{i+1}, \Ccal_{i+1}$ are sufficiently close, which implies that $f_X(\Ccal_{i+1}), f_X(\Cbar_{i+1})$ and $f_{B_i}(\Ccal_{i+1}), f_{B_i}(\Cbar_{i+1})$ are also sufficiently close. This allows us to use a similar proof to that of Lemma~\ref{lem: warm up main} where $\Cbar_{i+1}$ acts as a proxy for $\Ccal_{i+1}$. We formalize this intuition in what follows.

First, we state the following useful lemmas:
\begin{lemma}[\cite{KanungoMNPSW04}]
\label{lem: kanungo}
For any set $S \subseteq \mathbb{R}^d$ and any $C\in \mathbb{R}^d$ it holds that $\Delta(S, C) =  \Delta(S, cm(S)) + \size{S}\Delta(C, cm(S))$.
\end{lemma}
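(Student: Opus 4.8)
The plan is to prove Lemma~\ref{lem: kanungo} by the standard bias–variance decomposition of squared Euclidean distance about the mean. Write $S=(s_1,\dots,s_m)$ with $m=\size{S}$ and set $\mu=cm(S)=\frac{1}{m}\sum_{i=1}^m s_i$. The whole argument is to insert $\mu$ into each summand $\norm{s_i-C}^2$, expand the square, and collect the three resulting groups of terms.

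Concretely, I would write
\[
\Delta(S,C)=\sum_{i=1}^m\norm{s_i-C}^2=\sum_{i=1}^m\norm{(s_i-\mu)+(\mu-C)}^2=\sum_{i=1}^m\Bigl(\norm{s_i-\mu}^2+2\langle s_i-\mu,\;\mu-C\rangle+\norm{\mu-C}^2\Bigr).
\]
The first group of terms sums to $\sum_{i}\norm{s_i-\mu}^2=\Delta(S,cm(S))$; the third group sums to $m\norm{\mu-C}^2=\size{S}\,\Delta(C,cm(S))$; and the middle (cross) group is what needs to be checked.

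The one thing to verify — and it is the only real content, though entirely routine — is that the cross term vanishes. Pulling the fixed vector $\mu-C$ out of the sum, $\sum_{i=1}^m\langle s_i-\mu,\;\mu-C\rangle=\bigl\langle\sum_{i=1}^m(s_i-\mu),\;\mu-C\bigr\rangle$, and $\sum_{i=1}^m(s_i-\mu)=\bigl(\sum_{i=1}^m s_i\bigr)-m\mu=0$ precisely because $\mu$ is the center of mass of $S$. Substituting back leaves $\Delta(S,C)=\Delta(S,cm(S))+\size{S}\,\Delta(C,cm(S))$, which is the claim.

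I do not expect any genuine obstacle: the identity is a one-line computation once the cross term is seen to cancel (it is the $k=1$ / single-center special case of a parallel-axis-type decomposition), and it is quoted from \cite{KanungoMNPSW04} here purely as a packaged tool — in particular to relate $\Delta(B_i^j,\Ccal_{i}^j)$ and $\Delta(B_i^j,cm(B_i^j))$, and similarly for $X_i^j$, in the stability argument of Section~\ref{sec: main}.
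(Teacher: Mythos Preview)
Your proof is correct; this is exactly the standard bias--variance / parallel-axis decomposition, and the vanishing of the cross term by definition of $cm(S)$ is the only thing to check. Note that the paper itself does not prove this lemma at all --- it is simply quoted from \cite{KanungoMNPSW04} as a black box --- so there is no approach to compare against.
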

\begin{lemma}
\label{lem: Delta diff up}
For any $S\subseteq X$ and $C,C' \in [0,1]^d$, it holds that: $\size{\Delta(S,C') - \Delta(S,C)} \leq 2\sqrt{d}\size{S}\norm{C-C'}$.
\iffalse
\begin{align*}
    &\size{\Delta(S,C') - \Delta(S,C)} \leq 2\sqrt{d}\size{S}\norm{C-C'}
\end{align*}
\fi
\end{lemma}
\begin{proof}
Using Lemma~\ref{lem: kanungo} we get that $\Delta(S,C) = \Delta(S,cm(S)) + \size{S}\Delta(cm(S),C)$ and that $\Delta(S,C') = \Delta(S,cm(S)) + \size{S}\Delta(cm(S),C')$. Thus, it holds that $\size{\Delta(S,C') - \Delta(S,C)} = \size{S}\cdot \size{\Delta(cm(S),C') - \Delta(cm(S),C)}$.
Observe that for two vectors $x,y\in \mathbb{R}^d$ it holds that $\Delta(x,y) = (x -y) \cdot (x-y)$.
Let us switch to vector notation and bound $\size{\Delta(cm(S),C') - \Delta(cm(S),C)}$.
\begin{align*}
&\size{\Delta(cm(S),C') - \Delta(cm(S),C) }
\\&= \size{(cm(S)-C')\cdot(cm(S)-C') - (cm(S)-C)\cdot(cm(S)-C) }
\\& =  \size{-2cm(S)\cdot C' + C' \cdot C' + 2cm(S) \cdot C - C \cdot C }
\\&= \size{ 2cm(S) \cdot (C-C') + (C'-C) \cdot (C'+C)}
\\&=\size{(C-C') \cdot (2cm(S) - (C'+C))}
\\&\leq \norm{C-C'}\norm{2cm(S) - (C'+C)} \leq 2\sqrt{d}\norm{C-C'}
\end{align*}
Where in the last transition we used the Cauchy-Schwartz inequality.
\end{proof}

%Next we show that for our selection of learning rate and batch size it holds for every iteration $i\in[t]$ that center of mass of points assigned to every cluster center in the entire dataset and in the batch selected are sufficiently close.
First, we show that due to our choice of learning rate $\Ccal^j_{i+1}, \Cbar^j_{i+1}$ are sufficiently close.
\begin{lemma}
\label{lem: bound center dist}
For it holds w.h.p that $\forall i\in[t], j\in[k], \norm{\Ccal^j_{i+1} - \Cbar^j_{i+1}} \leq \frac{\epsilon}{10\sqrt{d}}$.
%$\forall i\in[t], j\in[k], \norm{\Cbar^j_{i+1} - \Ccal^j_{i+1}}=\norm{\alpha_i^j cm(B_i^j) - \alpha_i^j cm(X_i^j)} \leq \delta/\sqrt{d}$.
\end{lemma}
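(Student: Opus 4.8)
The plan is to show that the batch center of mass $cm(B_i^j)$ concentrates around the full-dataset center of mass $cm(X_i^j)$, and then exploit the fact that the learning rate $\alpha_i^j = \sqrt{b_i^j/b}$ dampens the discrepancy. Observe that by the update rules,
\[
\Ccal_{i+1}^j - \Cbar_{i+1}^j = \alpha_i^j\bigl(cm(B_i^j) - cm(X_i^j)\bigr),
\]
so it suffices to bound $\alpha_i^j \norm{cm(B_i^j) - cm(X_i^j)}$ coordinate by coordinate. The subtlety, exactly as in the warm-up, is that the partition $B_i^j$ is induced by $\Ccal_i$, which is fixed before $B_i$ is drawn, so the assignment of batch points to cluster $j$ is determined by a fixed predicate; hence $B_i^j$ is itself a uniform-with-repetition sample (of random size $b_i^j$) from $X_i^j$. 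I would condition on the value of $b_i^j$ and work inside that conditioning.

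First I would handle a single coordinate $r \in [d]$. Conditioned on $b_i^j = m$ with $m \geq 1$, the $r$-th coordinate of $cm(B_i^j)$ is an average of $m$ i.i.d. samples, each lying in $[0,1]$ and with mean equal to the $r$-th coordinate of $cm(X_i^j)$. By Hoeffding (Theorem~\ref{thm: hoeffding}) with $a_{\max}-a_{\min} \le 1$,
\[
\Pr\!\left[\,\bigl|cm(B_i^j)[r] - cm(X_i^j)[r]\bigr| \geq \gamma \,\big|\, b_i^j = m\right] \leq 2e^{-2m\gamma^2}.
\]
Multiplying through by the learning rate, $\alpha_i^j\,\bigl|cm(B_i^j)[r] - cm(X_i^j)[r]\bigr| = \sqrt{m/b}\,\bigl|cm(B_i^j)[r] - cm(X_i^j)[r]\bigr|$, so setting $\gamma = \tfrac{\epsilon}{10\sqrt{d}\cdot\sqrt{d}} \cdot \sqrt{b/m} = \tfrac{\epsilon}{10 d}\sqrt{b/m}$ makes the scaled deviation at most $\tfrac{\epsilon}{10d}$, with failure probability $2e^{-2m\gamma^2} = 2e^{-\epsilon^2 b/(50 d^2)}$. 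The key point is that the $m$-dependence cancels: $m\gamma^2 = \tfrac{\epsilon^2 b}{100 d^2}$ is independent of $m$, so the bound holds uniformly over all realizations of $b_i^j$ (including $b_i^j = 0$, where the left side is $0$). Then $\norm{\Ccal_{i+1}^j - \Cbar_{i+1}^j} \le \sqrt{d}\cdot \tfrac{\epsilon}{10d} = \tfrac{\epsilon}{10\sqrt{d}}$ whenever every coordinate is controlled.

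To finish, I would union-bound over all $i \in [t]$, all $j \in [k]$, and all $d$ coordinates: the total failure probability is at most $2 t k d\, e^{-\epsilon^2 b /(50 d^2)}$, which is $O(1/n)$ once $b = \Omega((d/\epsilon)^2 \log(ndtk))$, matching the stated batch size. Conditioning on $b_i^j$ is legitimate because, once we fix it, the sample is genuinely i.i.d. uniform on $X_i^j$ — this is where the argument crucially differs from needing $\Ccal_{i+1}$ to be independent of $B_i$; here $\Cbar_{i+1}$ is $B_i$-independent by construction, and $\Ccal_{i+1}$ is the quantity we are comparing it against. I expect the main obstacle to be stating the conditioning cleanly (handling the degenerate $b_i^j = 0$ case where $cm(B_i^j)$ is undefined but $\alpha_i^j = 0$, and making sure the Hoeffding application is valid for the random-but-conditioned sample size) rather than any hard inequality; the cancellation of the $m$-dependence in $\alpha_i^j$ against the $1/\sqrt{m}$ concentration rate is the crux and is exactly what the choice $\alpha_i^j = \sqrt{b_i^j/b}$ is engineered to give.
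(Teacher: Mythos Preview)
Your proposal is correct and matches the paper's proof essentially step for step: both condition on $b_i^j$, apply Hoeffding coordinate-wise to $cm(B_i^j)$ around $cm(X_i^j)$, observe that the choice $\alpha_i^j=\sqrt{b_i^j/b}$ makes the exponent $\Theta(b\epsilon^2/d^2)$ independent of $b_i^j$, and then union-bound over $i\in[t]$, $j\in[k]$, and $d$ coordinates. You also handle the $b_i^j=0$ case and the conditioning exactly as the paper does.
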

\begin{proof} 
Note that $\Ccal^j_{i+1} - \Cbar^j_{i+1} = \alpha_i^j( cm(B^j_i) - cm(X^j_i))$.
Let us fix some iteration $i$ and center $j$. To simplify notation, let us denote: $X'=X_i^j,B'=B_i^j,b'=b_i^j,\alpha'=\alpha_i^j$.
Although $b'$ is a random variable, in what follows we treat it as a fixed value (essentially conditioning on its value). As what follows holds for \emph{all} values of $b'$ it also holds without conditioning due to the law of total probabilities.

For the rest of the proof, we assume $b'>0$ (if $b'=0$ the claim holds trivially).
Let us denote by $\set{Y_\ell}_{\ell=1}^{b'}$ the sampled points in $B'$. Note that a randomly sampled element from $X$ is in $B'$ if and only if it is in $X'$. As batch elements are sampled uniformly at random with repetitions from $X$, conditioning on the fact that an element is in $B'$ means that it is distributed uniformly over $X'$. Thus, it holds that $\forall \ell, E[Y_\ell ] = \frac{1}{\size{X'}}\sum_{x\in X'} x = cm(X')$ and $E[cm(B')] = \frac{1}{b'}\sum_{\ell=1}^{b'}E[Y_\ell] = cm(X')$. Our goal is to bound $Pr[\norm{ cm(B') -  cm(X')} \geq \frac{\epsilon}{10\alpha'\sqrt{d}}]$, we note that it is sufficient to bound the deviation of every coordinate by $\eps/ (10\alpha'd)$, as that will guarantee that:
\begin{align*}
    &\norm{ cm(B') -  cm(X')}
     = \sqrt{\sum_{\ell = 1}^d (cm(B')[\ell] -  cm(X')[\ell])^2} 
    \leq \sqrt{\sum_{\ell = 1}^d (\frac{\epsilon}{10 \alpha' d})^2} = \frac{\epsilon}{10 \alpha' \sqrt{d}}
\end{align*}

We note that for a single coordinate, $\ell$, we can apply a Hoeffding bound with parameters $\mu = cm(X')[\ell], a_{max}-a_{min} \leq 1$ and get that:

\begin{align*}
    Pr[\size{cm(B')[\ell] - cm(X')[\ell]} \geq \frac{\epsilon}{10\alpha'd}] \leq 2\cdot e^{-\frac{2b'\epsilon^2}{100(\alpha')^2 d^2}} 
    %\leq 1/(dknt)
\end{align*}

Taking a union bound we get that
\begin{align*}
    &Pr[\norm{cm(B') - cm(X')} \geq \frac{\epsilon}{10\alpha'\sqrt{d}}]
    \\& \leq Pr[\exists \ell, \size{ cm(B')[\ell] -  cm(X')[\ell]} \geq \frac{\epsilon}{10\alpha'd}]
     \leq 2d\cdot e^{-\frac{2b'\epsilon^2}{100(\alpha')^2 d^2}}
\end{align*}

Using the fact that $\alpha' = \sqrt{b' / b}$ together with the fact that $b = \Omega((d/\eps)^2 \log (ntkd))$ (for an appropriate constant) we get that the above is $O(1/ntk)$. Finally, taking a union bound over all $t$ iterations and all $k$ centers per iteration completes the proof.
\end{proof}

Let us now use the above lemma to bound the goal function when cluster centers are close. 
\begin{lemma}
\label{lem: bound func diff}
Fix some $A\subseteq X$. It holds w.h.p that $\forall i\in[t], \size{f_A(\Cbar_{i+1}) - f_A(\Ccal_{i+1})} \leq \eps / 5$

%If it holds that $\forall j\in [k], \norm{\Cbar^j_{i+1} - \Ccal^j_{i+1}} \leq \delta /\sqrt{d} $, then it holds that: $f_A(\Cbar) - f_A(\Ccal) \leq 2\delta$.
%Flipping the order we also get that:

%\begin{align*}
%    &f_A(\Ccal) - f_A(\Cbar)
%    \\&\leq \frac{1}{\size{A}} \sum_{j=1}^k 2d\size{\Sbar^j}\norm{\Cbar^j - \Ccal^j}
%\end{align*}
\end{lemma}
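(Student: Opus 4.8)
The plan is to reduce the statement to a per-point estimate and then average. First I would condition on the high-probability event of Lemma~\ref{lem: bound center dist}, so that for every $i\in[t]$ and $j\in[k]$ we have $\norm{\Ccal^j_{i+1}-\Cbar^j_{i+1}}\leq \frac{\eps}{10\sqrt d}$; everything that follows is deterministic given this event, so the $\eps/5$ bound inherits the "w.h.p." qualifier. Note also that $\Cbar^j_{i+1}=(1-\alpha_i^j)\Ccal_i^j+\alpha_i^j\,cm(X_i^j)$ is a convex combination of points of $[0,1]^d$ (since $\Ccal_i^j\in[0,1]^d$ and $cm(X_i^j)$ is an average of points of $X\subseteq[0,1]^d$), hence $\Cbar_{i+1}\subseteq \Dcal$ and Lemma~\ref{lem: Delta diff up} is applicable to its centers.

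Next, fix an iteration $i$ and a point $a\in A$. Let $j$ be the index of the center to which $a$ is assigned under $\Ccal_{i+1}$, i.e. $f_a(\Ccal_{i+1})=\Delta(a,\Ccal_{i+1}^j)$. Since $f_a(\Cbar_{i+1})=\min_{\ell}\Delta(a,\Cbar_{i+1}^\ell)\leq \Delta(a,\Cbar_{i+1}^j)$, I apply Lemma~\ref{lem: Delta diff up} with the singleton tuple $S=(a)$ (so $\size{S}=1$ and $a\in X$), $C=\Ccal_{i+1}^j$, $C'=\Cbar_{i+1}^j$, obtaining
\[
f_a(\Cbar_{i+1})\leq \Delta(a,\Cbar_{i+1}^j)\leq \Delta(a,\Ccal_{i+1}^j)+2\sqrt d\,\norm{\Ccal_{i+1}^j-\Cbar_{i+1}^j}\leq f_a(\Ccal_{i+1})+2\sqrt d\cdot\frac{\eps}{10\sqrt d}=f_a(\Ccal_{i+1})+\eps/5.
\]
By the symmetric argument (take $j'$ to be the center assigned to $a$ under $\Cbar_{i+1}$ and swap the roles of $\Ccal_{i+1}$ and $\Cbar_{i+1}$) one gets $f_a(\Ccal_{i+1})\leq f_a(\Cbar_{i+1})+\eps/5$, hence $\size{f_a(\Cbar_{i+1})-f_a(\Ccal_{i+1})}\leq \eps/5$ for every $a\in A$.

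Finally, averaging over $A$ and using the triangle inequality,
\[
\size{f_A(\Cbar_{i+1})-f_A(\Ccal_{i+1})}=\size*{\frac{1}{\size A}\sum_{a\in A}\bigl(f_a(\Cbar_{i+1})-f_a(\Ccal_{i+1})\bigr)}\leq \frac{1}{\size A}\sum_{a\in A}\size{f_a(\Cbar_{i+1})-f_a(\Ccal_{i+1})}\leq \eps/5,
\]
which holds simultaneously for all $i\in[t]$ on the event above, completing the proof.

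There is no real obstacle here: the argument is a routine "nearby centers give nearby cost" estimate, and the only points needing care are (i) checking $\Cbar_{i+1}\subseteq\Dcal$ so that Lemma~\ref{lem: Delta diff up} may be invoked, and (ii) handling the $\min$ over centers by comparing each configuration's cost of $a$ to the cost of $a$ under the \emph{same} center index used by the other configuration, which is exactly what makes the one-sided bounds (and hence the absolute value) go through.
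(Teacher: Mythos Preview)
Your proposal is correct and follows essentially the same approach as the paper: condition on Lemma~\ref{lem: bound center dist}, then use Lemma~\ref{lem: Delta diff up} together with the ``compare to the same center index'' trick to get the one-sided bounds, and symmetrize. The only cosmetic difference is that you work pointwise (apply Lemma~\ref{lem: Delta diff up} to singletons and then average), whereas the paper works at the partition level (invokes Observation~\ref{obs: partition is opt} and applies Lemma~\ref{lem: Delta diff up} to each block $S^j$); these are the same computation.
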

\begin{proof}
Let $S = (S^\ell)_{\ell \in [k]},\Sbar = (\Sbar^\ell)_{\ell \in [k]}$ be the partitions induced by $\Ccal_{i+1}, \Cbar_{i+1}$ on $A$. Let us expand the expression:
\begin{align*}
    &f_A(\Cbar_{i+1}) - f_A(\Ccal_{i+1}) = \frac{1}{\size{A}} \sum_{j=1}^k \Delta(\Sbar^j, \Cbar^j_{i+1}) - \Delta(S^j, \Ccal^j_{i+1})
    %\\&\leq \frac{1}{\size{A}} \sum_{j=1}^k \Delta(S^j, \Cbar^j) - \Delta(S^j, \Ccal^j)
    \\&\leq \frac{1}{\size{A}} \sum_{j=1}^k \Delta(S^j, \Cbar^j_{i+1}) - \Delta(S^j, \Ccal^j_{i+1})
    \\&\leq \frac{1}{\size{A}} \sum_{j=1}^k 2\sqrt{d}\size{S^j}\norm{\Cbar^j_{i+1} - \Ccal^j_{i+1}}
    \leq \frac{1}{\size{A}} \sum_{j=1}^k \size{S^j}\eps / 5 = \eps / 5
\end{align*}
Where the first inequality is due to Observation~\ref{obs: partition is opt}, the second is due Lemma~\ref{lem: Delta diff up} and finally we use Lemma~\ref{lem: bound center dist} together with the fact that $\sum_{j=1}^k \size{S^j} = \size{A}$. Using the same argument we also get that $f_A(\Ccal_{i+1}) - f_A(\Cbar_{i+1}) \leq \eps /5$, which completes the proof.
\end{proof}

From here our proof is somewhat similar to that of Section~\ref{sec: warm up}. Let us state the following useful lemma.
%The main difference is that we use the concentration of the the batch cluster centers around their center for the entire dataset. We state the following auxilliary lemma.
\begin{lemma}
\label{lem: c-cbar bounds}
It holds w.h.p that for every $i\in [t]$ :
\begin{align}
    &f_{X}(\Cbar_{i+1}) - f_{X}(\Ccal_{i+1})\geq -\eps/5 \\
    &f_{B_i}(\Ccal_{i+1}) - f_{B_i}(\Cbar_{i+1})  \geq -\eps/5\\
    &f_{X}(\Ccal_{i}) - f_{B_i}(\Ccal_{i})  \geq -\eps/5 \\
    &f_{B_i}(\Cbar_{i+1}) - f_{X}(\Cbar_{i+1})  \geq -\eps/5
    %&f_{X}(\Ccal_{i+1}) - f_{X}(\Cbar_{i+1}) \leq \eps/5\\
    %&f_{B_i}(\Cbar_{i+1}) - f_{B_i}(\Ccal_{i+1}) \leq \eps/5\\
    %&f_{X}(\Ccal_{i}) - f_{B_i}(\Ccal_{i})  \leq \eps/5 \\
    %&f_{X}(\Cbar_{i+1}) - f_{B_i}(\Cbar_{i+1})  \leq \eps/5
\end{align}
\end{lemma}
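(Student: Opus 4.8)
The plan is to check the four bounds separately. Three of them fall out quickly from lemmas already established, while inequality~(4) is the one that motivated introducing the auxiliary centers $\Cbar_{i+1}$ in the first place, and is where the real work sits.

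Inequalities~(1) and~(2) are immediate from Lemma~\ref{lem: bound func diff}. The proof of that lemma only invokes Observation~\ref{obs: partition is opt}, Lemma~\ref{lem: Delta diff up}, and the high-probability event of Lemma~\ref{lem: bound center dist} (the per-center bound $\norm{\Ccal^j_{i+1}-\Cbar^j_{i+1}}\le \eps/(10\sqrt d)$ for all $i,j$); everything else is deterministic. Hence, on that single good event the conclusion $\size{f_A(\Cbar_{i+1})-f_A(\Ccal_{i+1})}\le\eps/5$ holds simultaneously for \emph{every} tuple $A\subseteq X$ and every $i\in[t]$ — in particular for $A=X$, which gives~(1), and for the random batch $A=B_i$, which gives~(2).

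For inequality~(3), $f_X(\Ccal_i)-f_{B_i}(\Ccal_i)\ge-\eps/5$, we use that $\Ccal_i$ is determined by $\Ccal_1,B_1,\dots,B_{i-1}$, so it is a fixed tuple of $k$ centers once we condition on the history preceding the $i$-th batch, while $B_i$ is then a fresh uniform sample from $X$. This is exactly the setting of Lemma~\ref{lem: fB clost to fX}, applied with $\Ccal=\Ccal_i$ and $\delta=\eps/5$; since $b=\Omega((d/\eps)^2\log(ndtk))$ the failure probability is $2e^{-2b(\eps/5)^2/d^2}=o(1/(ntk))$, and as this holds conditionally on every value of the history it holds unconditionally. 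A union bound over $i\in[t]$ then yields~(3) w.h.p.

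Inequality~(4), $f_{B_i}(\Cbar_{i+1})-f_X(\Cbar_{i+1})\ge-\eps/5$, is the main obstacle. The subtlety is that, although $\Cbar_{i+1}$ was built to avoid the dependence on $B_i$ that we could not control for $\Ccal_{i+1}$, it still depends on $B_i$ through the learning rates $\alpha_i^j=\sqrt{b_i^j/b}$, i.e.\ through the per-cluster counts $b_i^1,\dots,b_i^k$ — and through nothing else, since the endpoints $\Ccal_i^j$ and $cm(X_i^j)$ of the segment on which $\Cbar_{i+1}^j$ lies are history-measurable. The plan is therefore to condition on the history \emph{and} on the counts $(b_i^\ell)_{\ell\in[k]}$: this pins $\Cbar_{i+1}$ down to one fixed tuple, and conditioned on it each $B_i^\ell$ still consists of $b_i^\ell$ i.i.d.\ uniform draws from $X_i^\ell$. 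A Hoeffding bound in the style of Lemma~\ref{lem: fB clost to fX} then concentrates $f_{B_i}(\Cbar_{i+1})$ around $\sum_\ell (b_i^\ell/b)\,\bar f_\ell$, where $\bar f_\ell$ is the average of $f_x(\Cbar_{i+1})$ over $x\in X_i^\ell$; the remaining gap to $f_X(\Cbar_{i+1})=\sum_\ell(\size{X_i^\ell}/n)\bar f_\ell$ is closed by a Chernoff bound showing $b_i^\ell/b$ concentrates on $\size{X_i^\ell}/n$, with the rare-cluster regime ($\size{X_i^\ell}/n$ comparable to $1/b$, whose contribution to either sum is negligible) handled separately. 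The key point is that once the counts are fixed $\Cbar_{i+1}$ is a single tuple per iteration, so no union bound over the $2^{\Theta(kd)}$ possible center configurations is required — only over the $t$ iterations — which is exactly why the batch size can be a factor $kd$ smaller than in Section~\ref{sec: warm up}. Applying the law of total probability over the values of $(b_i^\ell)_\ell$ and a union bound over $i\in[t]$ completes the proof of~(4), and hence of the lemma.
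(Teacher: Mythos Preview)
For inequalities (1)--(3) your argument coincides with the paper's: (1) and (2) are instances of Lemma~\ref{lem: bound func diff} with $A=X$ and $A=B_i$, and (3) is Lemma~\ref{lem: fB clost to fX} applied with the history-measurable $\Ccal=\Ccal_i$.

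For inequality (4) the paper's proof is much shorter than yours: it simply applies Lemma~\ref{lem: fB clost to fX} once more with $\Ccal=\Cbar_{i+1}$, relying on the assertion (from the proof outline) that ``$\Cbar_{i+1}$ only depends on $\Ccal_i,X$ and is independent of $B_i$.'' You have correctly observed that this is not literally true under the paper's own definitions, since $\Cbar_{i+1}^j=(1-\alpha_i^j)\Ccal_i^j+\alpha_i^j\,cm(X_i^j)$ uses $\alpha_i^j=\sqrt{b_i^j/b}$, which depends on the batch through the per-cluster counts. The paper's proof glosses over this dependence; your conditioning-on-counts argument is a genuine attempt to repair it rather than a different route to the same end.

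That said, your sketch for (4) is not complete, and the difficulty is concentrated exactly in the step ``a Chernoff bound showing $b_i^\ell/b$ concentrates on $\size{X_i^\ell}/n$.'' After conditioning on the counts, the conditional mean of $f_{B_i}(\Cbar_{i+1})$ is $\sum_\ell(b_i^\ell/b)\bar f_\ell$, and you must still bridge to $f_X(\Cbar_{i+1})=\sum_\ell(\size{X_i^\ell}/n)\bar f_\ell$. The gap is $\sum_\ell(b_i^\ell/b-\size{X_i^\ell}/n)\bar f_\ell$, where each $\bar f_\ell\in[0,d]$ \emph{and itself depends on the counts} (through $\Cbar_{i+1}$), so you cannot treat the $\bar f_\ell$ as fixed when you concentrate the counts. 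Bounding crudely by $d\cdot\sum_\ell\lvert b_i^\ell/b-\size{X_i^\ell}/n\rvert$ and using multinomial $L_1$ concentration yields a deviation of order $d\sqrt{k/b}$ (up to logs), which is $\le\eps$ only when $b=\Omega(k(d/\eps)^2)$ --- a factor $k$ larger than the $b=\Tilde\Omega((d/\eps)^2)$ the paper assumes. So your proposal identifies a real gap in the paper's argument, but the fix as sketched does not yet close it within the stated batch size; either a sharper argument is needed for this step, or the dependence of $\Cbar_{i+1}$ on $B_i$ must be removed at the level of the definition (e.g.\ by using a history-measurable learning rate in the auxiliary centers).
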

\begin{proof}
The first two inequalities follow from Lemma~\ref{lem: bound func diff}. The last two are due to Lemma~\ref{lem: fB clost to fX} by setting $\delta = \eps/5$, $B=B_i$:
\[
Pr[\size{f_{B_i}(\Ccal) - f_X(\Ccal)} \geq \delta]  \leq 2e^{-2b\delta^2 / d^2} = e^{-\Theta(b\eps^2 /d^2)} = e^{-\Omega(\log (nt))}  =  O(1/nt)
\]
Where the last inequality is due to the fact that $b = \Omega((d/\eps)^2 \log (nt))$ (for an appropriate constant). The above holds for either $\Ccal=\Ccal_i$ or $\Ccal = \Cbar_{i+1}$. Taking a union bound over all $t$ iterations we get the desired result. 
\end{proof}

\paragraph{Putting everything together} We wish to lower bound $f_X(\Ccal_i) - f_X(\Ccal_{i+1})$. We write the following:
\begin{align*}
    &f_X(\Ccal_i) - f_X(\Ccal_{i+1}) = f_X(\Ccal_i) \pm f_{B_i}(\Ccal_i) - f_X(\Ccal_{i+1}) 
    \\& \geq f_{B_i}(\Ccal_i) - f_X(\Ccal_{i+1}) - \epsilon/5
     = f_{B_i}(\Ccal_i) \pm f_{B_i}(\Ccal_{i+1}) - f_X(\Ccal_{i+1}) - \epsilon/5
    \\& \geq f_{B_i}(\Ccal_{i+1}) - f_X(\Ccal_{i+1}) + 4\eps / 5
    \\& = f_{B_i}(\Ccal_{i+1}) \pm f_{B_i}(\Cbar_{i+1}) \pm f_{X}(\Cbar_{i+1}) - f_X(\Ccal_{i+1})  + 4\eps / 5 \geq \eps/5
\end{align*}
Where the first inequality is due to inequality (3) in Lemma~\ref{lem: c-cbar bounds} ($f_{X}(\Ccal_{i}) - f_{B_i}(\Ccal_{i})  \geq -\eps/5$), the second is due to the stopping condition of the algorithm ($f_{B_i}(\Ccal_i) - f_{B_i}(\Ccal_{i+1}) > \eps$), and the last is due to the remaining inequalities in Lemma~\ref{lem: c-cbar bounds}. The above holds w.h.p over all of the iterations of the algorithms.

As in Section~\ref{sec: warm up}, we conclude that when $b=\Omega((d/\eps)^2 \log (knd/\eps))$ the algorithm terminates within $t = O(d/\eps)$ iteration w.h.p. We state our main theorem.
\begin{theorem}
For $b = \Tilde{\Omega}((d/\eps)^2)$ and $\alpha_i^j = \sqrt{b_i^j / b}$, Algorithm~\ref{alg: kmeans} with center update $\Ccal_{i+1}^j = (1-\alpha_i^j)\Ccal_{i}^j + \alpha^j_i cm(B_i^j)$, terminates within $O(d / \epsilon)$ iterations w.h.p.
\end{theorem}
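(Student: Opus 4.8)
The plan is to reduce the theorem to a per-iteration \emph{global}-progress statement and combine it with the boundedness of $f_X$ (Lemma~\ref{lem: goal func ub}). Fix $t = \ceil{5d/\eps}+1$, a quantity independent of $n$. Since the prescribed batch size satisfies $b = \Omega((d/\eps)^2\log(ndtk)) = \Omega((d/\eps)^2\log(ndk/\eps)) = \Tilde{\Omega}((d/\eps)^2)$, Lemma~\ref{lem: c-cbar bounds} (whose proof in turn invokes Lemmas~\ref{lem: bound center dist}, \ref{lem: bound func diff}, and \ref{lem: fB clost to fX} a constant number of times, each already union-bounded over the $t$ iterations) guarantees that, with probability $1-O(1/n)$, all four of its inequalities hold for every $i\in[t]$ simultaneously. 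Suppose, towards a contradiction, that on this event the algorithm does not terminate during its first $t$ iterations; then the early-stopping test fails at each $i\in[t]$, i.e. $f_{B_i}(\Ccal_i) - f_{B_i}(\Ccal_{i+1}) \geq \eps$.

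Conditioned on this event I would run the telescoping estimate of the ``Putting everything together'' paragraph verbatim: writing
\begin{align*}
f_X(\Ccal_i) - f_X(\Ccal_{i+1})
&= \bigl[f_X(\Ccal_i) - f_{B_i}(\Ccal_i)\bigr] + \bigl[f_{B_i}(\Ccal_i) - f_{B_i}(\Ccal_{i+1})\bigr] \\
&\quad {}+ \bigl[f_{B_i}(\Ccal_{i+1}) - f_{B_i}(\Cbar_{i+1})\bigr] + \bigl[f_{B_i}(\Cbar_{i+1}) - f_X(\Cbar_{i+1})\bigr] \\
&\quad {}+ \bigl[f_X(\Cbar_{i+1}) - f_X(\Ccal_{i+1})\bigr],
\end{align*}
the four bracketed gap terms other than $[f_{B_i}(\Ccal_i) - f_{B_i}(\Ccal_{i+1})]$ are each at least $-\eps/5$ by Lemma~\ref{lem: c-cbar bounds}, and that remaining term is at least $\eps$ by the failed stopping test, so $f_X(\Ccal_i) - f_X(\Ccal_{i+1}) \geq \eps - 4\cdot(\eps/5) = \eps/5$ for every $i\in[t]$. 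Telescoping over $i=1,\dots,t$ yields $f_X(\Ccal_{t+1}) \leq f_X(\Ccal_1) - t\cdot(\eps/5) \leq d - (5d/\eps)(\eps/5) - \eps/5 < 0$, contradicting $f_X\geq 0$. Hence, with probability $1-O(1/n)$, the algorithm halts within $t=O(d/\eps)$ iterations; as in Section~\ref{sec: warm up} the $1/n$ failure probability is amplified to $1/n^p$ by enlarging $b$ by a constant factor.

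Everything above except Lemma~\ref{lem: bound center dist} is bookkeeping, so I expect that lemma — which is exactly what lets the batch size shrink by a factor $\sim kd$ relative to Section~\ref{sec: warm up} — to be the main obstacle. Its point is that the proxy center $\Cbar_{i+1}^j = (1-\alpha_i^j)\Ccal_i^j + \alpha_i^j\, cm(X_i^j)$ is a deterministic function of $\Ccal_i$ and $X$, hence independent of $B_i$, so Lemma~\ref{lem: fB clost to fX} applies to it directly with $\delta=\eps/5$ and no union bound over center tuples is needed; $\Ccal_{i+1}$ then only has to be shown $\eps$-close to $\Cbar_{i+1}$ via $\norm{\Ccal_{i+1}^j - \Cbar_{i+1}^j} = \alpha_i^j\norm{cm(B_i^j) - cm(X_i^j)}$ together with Lemmas~\ref{lem: Delta diff up} and \ref{lem: bound func diff}. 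The delicate arithmetic is that a coordinate of $cm(B_i^j)$ concentrates around the corresponding coordinate of $cm(X_i^j)$ at scale $\sim 1/\sqrt{b_i^j}$ (Hoeffding over $b_i^j$ conditionally-uniform samples), and the choice $\alpha_i^j=\sqrt{b_i^j/b}$ makes the Hoeffding exponent $b_i^j\cdot\bigl(\eps/(10\alpha_i^j d)\bigr)^2 = \eps^2 b/(100 d^2)$ \emph{independent of the random cluster size} $b_i^j$ — which is precisely why $b = \Tilde{\Omega}((d/\eps)^2)$ suffices. The one subtlety I would be careful about is that $b_i^j$ is itself random: I would condition on its value, observe that the resulting bound is uniform in $b_i^j$, and remove the conditioning by the law of total probability, handling $b_i^j=0$ trivially (there $\alpha_i^j=0$ and $\Ccal_{i+1}^j = \Ccal_i^j = \Cbar_{i+1}^j$).
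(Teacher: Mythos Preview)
Your proposal is correct and follows essentially the same route as the paper: you invoke Lemma~\ref{lem: c-cbar bounds} (built on Lemmas~\ref{lem: bound center dist}, \ref{lem: bound func diff}, \ref{lem: fB clost to fX}), run the identical five-term telescoping decomposition from the ``Putting everything together'' paragraph to obtain $f_X(\Ccal_i)-f_X(\Ccal_{i+1})\geq \eps/5$, and conclude via Lemma~\ref{lem: goal func ub}. Your account of Lemma~\ref{lem: bound center dist} --- conditioning on $b_i^j$, coordinatewise Hoeffding, and the observation that $\alpha_i^j=\sqrt{b_i^j/b}$ renders the exponent $b_i^j(\eps/(10\alpha_i^j d))^2=\eps^2 b/(100 d^2)$ independent of $b_i^j$ --- matches the paper's argument exactly.
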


\section{Application to sklearn}
\label{sec: sklearn}
In this section, we show the relevance of our results to the algorithm implementation of sklearn. The main differences in sklearn are the learning rate and stopping condition.
The termination condition\footnote{The exact parameters of this algorithm were extracted directly from the code (the relevant function is \_mini\_batch\_convergence): \url{https://github.com/scikit-learn/scikit-learn/blob/baf828ca1/sklearn/cluster/_kmeans.py\#L1502}.} depends on the movement of the centers in the iteration, rather than the value of $f_{B_{i}}$. Specifically, we continue as long as $\sum_{j \in [k]} \Delta(\Ccal_{i+1}^j, \Ccal_{i}^j) \geq \eps$ for some tolerance parameter $\eps$.
The learning rate is set as $\alpha^j_i = \frac{b_i^j}{\sum_{\ell=1}^i b_\ell^j}$. Roughly speaking, this implies that $\alpha_i^j \rightarrow 0$ over time, and guarantees termination of the algorithm in the limit.

However, for our convergence guarantee, we only require $\alpha_i^j = \sqrt{b_i^j / b}$ which need not go to 0 over time. We show that with our learning rate and the termination condition of sklearn, the proof from Section~\ref{sec: main} still implies termination, although at a slower rate and requires a larger batch size. Specifically, we terminate within $O((d/\eps)^{1.5} \sqrt{k})$ iterations w.h.p if the batch size is $\Tilde{\Omega}(k(d/\eps)^3)$. Note that this result is not subsumed by the result in Section~\ref{sec: warm up} because the stopping condition is different.

Below we show that as long as the termination condition in sklearn does not hold ($\sum_{j \in [k]} \Delta(\Ccal_{i+1}^j, \Ccal_{i}^j) \geq \eps$), our stopping condition also does not hold for an appropriate parameter ( $f_{B_{i}}(\Ccal_{i}) - f_{B_{i}}(\Ccal_{i+1}) > \epsilon'$ where $\eps' = \eps^{1.5}/\sqrt{kd})$. We state the following useful lemma: 
\begin{lemma}
\label{lem: delta weighted avg}
Let $x,y \in \mathbb{R}^d, \alpha \in [0,1]$. It holds that $\Delta(x, (1-\alpha) x + \alpha y) = \alpha^2 \Delta(x,y)$.
\end{lemma}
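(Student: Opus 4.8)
The plan is to prove the identity by directly expanding the definition of $\Delta$ and simplifying the affine combination. Recall that $\Delta(x,z) = \norm{x-z}^2$ for any $x,z\in\mathbb{R}^d$. First I would set $z = (1-\alpha)x + \alpha y$ and compute the difference vector $x - z = x - (1-\alpha)x - \alpha y$; the $x$-terms combine as $x - (1-\alpha)x = \alpha x$, so $x - z = \alpha x - \alpha y = \alpha(x-y)$. Then $\Delta(x,z) = \norm{\alpha(x-y)}^2 = \alpha^2\norm{x-y}^2 = \alpha^2\Delta(x,y)$, using the fact that $\norm{\alpha v}^2 = \alpha^2\norm{v}^2$ for any scalar $\alpha$ and vector $v$. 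This is the whole argument.

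There is no real obstacle here: the statement is an elementary algebraic identity, and the only point requiring any care is the simplification of the convex combination $(1-\alpha)x + \alpha y$ so that the $x$-contributions cancel down to $\alpha x$. I would also remark that the hypothesis $\alpha\in[0,1]$ plays no role in the identity itself and is stated only because $\alpha$ will be instantiated as a learning rate $\alpha_i^j\in[0,1]$ when the lemma is applied. Looking ahead, the role of this lemma in Section~\ref{sec: sklearn} will be to rewrite the per-center movement $\Delta(\Ccal_{i+1}^j,\Ccal_i^j)$, which for the standard update $\Ccal_{i+1}^j = (1-\alpha_i^j)\Ccal_i^j + \alpha_i^j\, cm(B_i^j)$ equals $(\alpha_i^j)^2\Delta(\Ccal_i^j, cm(B_i^j))$; combined with the choice $\alpha_i^j = \sqrt{b_i^j/b}$ this converts the sklearn termination condition $\sum_{j\in[k]}\Delta(\Ccal_{i+1}^j,\Ccal_i^j)\geq\eps$ into a lower bound usable with the earlier machinery (Lemmas~\ref{lem: kanungo} and~\ref{lem: Delta diff up}).
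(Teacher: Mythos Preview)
Your proof is correct and is essentially identical to the paper's own proof: both compute $x - ((1-\alpha)x+\alpha y) = \alpha(x-y)$ and then use $\norm{\alpha(x-y)}^2 = \alpha^2\norm{x-y}^2$. Your additional remarks (that $\alpha\in[0,1]$ is not needed for the identity, and the preview of how the lemma is applied) are accurate but go beyond what the paper includes.
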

\begin{proof}
%Let us expand the expression: 
$\Delta(x, (1-\alpha) x + \alpha y) = \norm{x - (1-\alpha) x + \alpha y}^2  = \norm{\alpha x - \alpha y}^2 = \alpha^2\Delta(x,y)$.
\iffalse
\begin{align*}
    &\Delta(x, (1-\alpha) x + \alpha y) = \norm{x - (1-\alpha) x + \alpha y}^2  = \norm{\alpha x - \alpha y}^2 = \alpha^2\Delta(x,y)
\end{align*}
\fi
\end{proof}

Below is our main lemma for this section:
\begin{lemma}
%$\forall i, \frac{1}{b}\sum_{j \in [k]} \Delta(\Ccal_{i+1}^j, \Ccal_{i}^j) > \eps \implies f_{B_{i}}(\Ccal_{i}) - f_{B_{i}}(\Ccal_{i+1}) > \epsilon$.
If it holds that $\sum_{j \in [k]} \Delta(\Ccal_{i+1}^j, \Ccal_{i}^j) > \epsilon$ then $f_{B_{i}}(\Ccal_{i}) - f_{B_{i}}(\Ccal_{i+1}) > \frac{\eps^{1.5}}{\sqrt{kd}}$.
\end{lemma}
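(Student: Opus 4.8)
The plan is to lower-bound the per-iteration batch improvement $f_{B_i}(\Ccal_i) - f_{B_i}(\Ccal_{i+1})$ cluster-by-cluster and relate it to the center movement $\sum_{j\in[k]}\Delta(\Ccal_{i+1}^j,\Ccal_i^j)$. First I would score $B_i$ against $\Ccal_{i+1}$ using the (generally suboptimal) partition $(B_i^\ell)_{\ell\in[k]}$ that $\Ccal_i$ induces: by Observation~\ref{obs: partition is opt} this gives $f_{B_i}(\Ccal_{i+1}) \leq \frac1b\sum_{j=1}^k \Delta(B_i^j,\Ccal_{i+1}^j)$, hence $f_{B_i}(\Ccal_i) - f_{B_i}(\Ccal_{i+1}) \geq \frac1b\sum_{j=1}^k\left(\Delta(B_i^j,\Ccal_i^j) - \Delta(B_i^j,\Ccal_{i+1}^j)\right)$ (the left term is an equality since $(B_i^j)$ is the $\Ccal_i$-induced partition). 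Applying Lemma~\ref{lem: kanungo} to both $\Delta(B_i^j,\Ccal_i^j)$ and $\Delta(B_i^j,\Ccal_{i+1}^j)$, the common $\Delta(B_i^j,cm(B_i^j))$ terms cancel and each summand becomes $b_i^j\left(\Delta(\Ccal_i^j,cm(B_i^j)) - \Delta(\Ccal_{i+1}^j,cm(B_i^j))\right)$. Because the update rule places $\Ccal_{i+1}^j=(1-\alpha_i^j)\Ccal_i^j+\alpha_i^j\,cm(B_i^j)$ on the segment from $\Ccal_i^j$ to $cm(B_i^j)$, we get $\Delta(\Ccal_{i+1}^j,cm(B_i^j))=(1-\alpha_i^j)^2\Delta(\Ccal_i^j,cm(B_i^j))$, so the summand equals $b_i^j\big(1-(1-\alpha_i^j)^2\big)\Delta(\Ccal_i^j,cm(B_i^j)) \geq b_i^j\alpha_i^j\Delta(\Ccal_i^j,cm(B_i^j)) \geq 0$, using $\alpha_i^j\in[0,1]$. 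Empty clusters ($b_i^j=0$, whence $\alpha_i^j=0$ and $\Ccal_{i+1}^j=\Ccal_i^j$) contribute $0$ and may be ignored throughout.

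Next I would turn the $\Delta(\Ccal_i^j,cm(B_i^j))$ factors into center-movement terms. Lemma~\ref{lem: delta weighted avg} gives $\Delta(\Ccal_i^j,\Ccal_{i+1}^j)=(\alpha_i^j)^2\Delta(\Ccal_i^j,cm(B_i^j))$, and plugging in $\alpha_i^j=\sqrt{b_i^j/b}$ turns the running bound into $f_{B_i}(\Ccal_i) - f_{B_i}(\Ccal_{i+1}) \geq \frac1b\sum_{j=1}^k\frac{b_i^j}{\alpha_i^j}\Delta(\Ccal_i^j,\Ccal_{i+1}^j) = \frac{1}{\sqrt b}\sum_{j=1}^k\sqrt{b_i^j}\,\Delta(\Ccal_i^j,\Ccal_{i+1}^j)$. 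The one non-routine step --- and what I expect to be the crux --- is eliminating the awkward $\sqrt{b_i^j}$ weight: since $\Ccal_i^j$ and $cm(B_i^j)$ both lie in the unit cube, $\Delta(\Ccal_i^j,cm(B_i^j))\leq d$, so by Lemma~\ref{lem: delta weighted avg} again $\Delta(\Ccal_i^j,\Ccal_{i+1}^j)=(\alpha_i^j)^2\Delta(\Ccal_i^j,cm(B_i^j))\leq d\,b_i^j/b$, i.e.\ $\sqrt{b_i^j}\geq\sqrt{b/d}\cdot\Delta(\Ccal_i^j,\Ccal_{i+1}^j)^{1/2}$. Substituting this, the factors of $\sqrt b$ cancel and I am left with $f_{B_i}(\Ccal_i) - f_{B_i}(\Ccal_{i+1}) \geq \frac{1}{\sqrt d}\sum_{j=1}^k\Delta(\Ccal_i^j,\Ccal_{i+1}^j)^{3/2}$, which is exactly the shape that produces the target exponent $1.5$.

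Finally I would apply Jensen's inequality (Theorem~\ref{thm: jensen}) to the convex map $x\mapsto x^{3/2}$ on $[0,\infty)$ with uniform weights over the $k$ clusters, obtaining $\sum_{j=1}^k\Delta(\Ccal_i^j,\Ccal_{i+1}^j)^{3/2}\geq\frac{1}{\sqrt k}\big(\sum_{j=1}^k\Delta(\Ccal_i^j,\Ccal_{i+1}^j)\big)^{3/2}$, and then invoke the hypothesis $\sum_{j\in[k]}\Delta(\Ccal_{i+1}^j,\Ccal_i^j)>\eps$ to conclude $f_{B_i}(\Ccal_i) - f_{B_i}(\Ccal_{i+1}) > \frac{1}{\sqrt d}\cdot\frac{\eps^{3/2}}{\sqrt k} = \frac{\eps^{1.5}}{\sqrt{kd}}$, as required. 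Note that this lemma is purely deterministic and places no lower bound on $b$; the batch-size requirement of this section enters only later, when the machinery of Section~\ref{sec: main} is reused with the smaller threshold $\eps'=\eps^{1.5}/\sqrt{kd}$.
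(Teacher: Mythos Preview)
Your proof is correct and follows essentially the same route as the paper: both arguments use Observation~\ref{obs: partition is opt} and Lemma~\ref{lem: kanungo} to reduce to the per-cluster bound $\Delta(B_i^j,\Ccal_i^j)-\Delta(B_i^j,\Ccal_{i+1}^j)\geq \alpha_i^j b_i^j\,\Delta(\Ccal_i^j,cm(B_i^j))$, and then combine Jensen for $x\mapsto x^{3/2}$ with the crude estimate $\Delta(\Ccal_i^j,cm(B_i^j))\leq d$ to reach $\eps^{1.5}/\sqrt{kd}$. The only cosmetic difference is in how Jensen is invoked: the paper applies it with weights $a_j=\Delta(\Ccal_i^j,cm(B_i^j))$ and variables $y_j=b_i^j/b$, bounding the denominator $\sum_j\Delta(\Ccal_i^j,cm(B_i^j))\leq kd$ at the end, whereas you first use $\Delta(\Ccal_i^j,cm(B_i^j))\leq d$ termwise to rewrite everything as $\frac{1}{\sqrt d}\sum_j\Delta(\Ccal_i^j,\Ccal_{i+1}^j)^{3/2}$ and then apply Jensen with uniform weights; your variant has the small expository benefit of expressing the intermediate bound directly in the center-movement quantities that the sklearn stopping rule measures.
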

\begin{proof}
 Recall that $\Ccal_{i+1}^j = (1-\alpha^j_i) \Ccal_{i}^j + \alpha^j_i cm(B_i^j)$ for $\alpha^j_i =  \sqrt{b_i^j / b}$. Thus, we get:
\begin{align}
\label{eq: last lem sec 5}
&\epsilon < \sum_{j \in [k]} \Delta(\Ccal_{i}^j, \Ccal_{i+1}^j) 
\leq \sum_{j \in [k]} (\alpha^j_i)^2\Delta(\Ccal_{i}^j, cm(B_i^j))
= \sum_{j \in [k]}\frac{b_i^j}{b} \Delta(\Ccal_{i}^j, cm(B_i^j))
%\\ & \implies \sum_{j \in [k]}\frac{b_i^j}{b} \Delta(\Ccal_{i}^j, cm(B_i^j)) \geq \epsilon  
\end{align}
Where in the transitions we used Lemma~\ref{lem: delta weighted avg}.
Let us fix some $j\in[k]$, we can write the following:

\begin{align*}
&\Delta(B_i^j, \Ccal_i^j) - \Delta(B_{i}^j,\Ccal_{i+1}^j)
\\&=\Delta(B_i^j, cm(B_i^j)) + b_i^j\Delta(\Ccal_{i}^j, cm(B_i^j))-\Delta(B_i^j, cm(B_i^j)) - b_i^j\Delta(\Ccal_{i+1}^j, cm(B_i^j))
\\ &= b_i^j(\Delta(\Ccal_i^j, cm(B_i^j)) - \Delta(\Ccal_{i+1}^j, cm(B_i^j)))
\\ &= b_i^j(\Delta(\Ccal_i^j, cm(B_i^j)) - \Delta((1-\alpha^j_i) \Ccal_{i}^j + \alpha^j_i cm(B_i^j), cm(B_i^j)))
\\ &= b_i^j(\Delta(\Ccal_i^j, cm(B_i^j)) - (1-\alpha_i^j)^2\Delta(\Ccal_{i}^j, cm(B_i^j))) 
\\ &= (2\alpha^j_i-(\alpha^j_i)^2)b_i^j\Delta(\Ccal_i^j, cm(B_i^j))
\geq \alpha^j_i b_i^j\Delta(\Ccal_i^j, cm(B_i^j)) = 
%\\ &= \alpha^j_i(2-\alpha^j_i)b_i^j\Delta(\Ccal_i^j, cm(B_i^j))
\end{align*}
Where in the first transition we apply Lemma~\ref{lem: kanungo}, and in the last we use the fact that $\Delta(\Ccal_{i+1}^j, cm(B_i^j)) = (\alpha^j_i)^2\Delta(\Ccal_i^j, cm(B_i^j))$ and the fact that $\forall,\alpha^j_i\in [0,1], 2\alpha^j_i-(\alpha^j_i)^2 \geq \alpha_i^j$. %\Gcomment{seperate lemma}. 
Let us bound $f_{B_{i}}(\Ccal_{i}) - f_{B_{i}}(\Ccal_{i+1})$:
\begin{align*}
    &f_{B_{i}}(\Ccal_{i}) - f_{B_{i}}(\Ccal_{i+1})
    \geq \frac{1}{b}\sum_{j=1}^k (\Delta(B_i^j, \Ccal_i^j) - \Delta(B_{i}^j,\Ccal_{i+1}^j))
    \\ &\geq \sum_{j=1}^k \frac{\alpha_i^j b_i^j}{b} \Delta(\Ccal_i^j, cm(B_i^j))  = \sum_{j=1}^k \left(\frac{b_i^j}{b}\right)^{1.5} \Delta(\Ccal_i^j, cm(B_i^j)) 
\end{align*}
Where the first inequality is due to Observation~\ref{obs: partition is opt}, the second is due to the fact that $\forall j\in[k], \Delta(B_i^j, \Ccal_i^j) -\Delta(B_{i}^j,\Ccal_{i+1}^j) \geq \alpha^j_i b_i^j\Delta(\Ccal_i^j, cm(B_i^j))$, and in the last equality we simply plug in $\alpha_i^j=\frac{b_i^j}{b}$ combined with. We complete the proof by applying Jensen's inequality, with parameters: $\phi(x)=x^{1.5}$, $y_j = b_i^j/b$ and $a_j = \Delta(\Ccal_i^j, cm(B_i^j))$, combined with inequality (\ref{eq: last lem sec 5}).
\begin{align*}
    &\sum_{j=1}^k \left(\frac{b_i^j}{b}\right)^{1.5} \Delta(\Ccal_i^j, cm(B_i^j)) \geq \left(\sum_{j=1}^k \Delta(\Ccal_i^j, cm(B_i^j))\right)\cdot \left(\frac{\sum_{j=1}^k\frac{b_i^j}{b} \Delta(\Ccal_i^j, cm(B_i^j))}{\sum_{j=1}^k \Delta(\Ccal_i^j, cm(B_i^j))}\right)^{1.5}
    \\& \geq \frac{\eps^{1.5}}{\sqrt{\sum_{j=1}^k \Delta(\Ccal_i^j, cm(B_i^j))}} \geq \frac{\eps^{1.5}}{\sqrt{kd}}
\end{align*}
\end{proof}

Finally, plugging $\epsilon' = \frac{\eps^{1.5}}{\sqrt{kd}}$ into our bounds, we conclude that if $b = \Tilde{\Omega}(\eps^{-3}d^3k)$ then the number of iterations is bounded by $O((d / \epsilon)^{1.5}\sqrt{k})$ w.h.p.

\subsubsection*{Acknowledgments}
The author would like to thank Ami Paz, Uri Meir and 	Giovanni Viglietta for reading preliminary versions of this work. 

This work was supported by JSPS KAKENHI Grant Numbers JP21H05850, JP21K17703,
JP21KK0204.
%\end{appendix}
\bibliographystyle{alpha}
\bibliography{paper}
\end{document}